\begin{document}

\title{Bayesian Counterfactual Risk Minimization}

\author{
	Ben London \\
	\texttt{blondon@amazon.com} \\
	Amazon
	\and
	Ted Sandler \\
	\texttt{sandler@amazon.com} \\
	Amazon
}

\date{}
\maketitle



\begin{abstract}
We present a Bayesian view of counterfactual risk minimization (CRM) for offline learning from logged bandit feedback. Using PAC-Bayesian analysis, we derive a new generalization bound for the truncated inverse propensity score estimator. We apply the bound to a class of Bayesian policies, which motivates a novel, potentially data-dependent, regularization technique for CRM. Experimental results indicate that this technique outperforms standard $L_2$ regularization, and that it is competitive with variance regularization while being both simpler to implement and more computationally efficient.
\end{abstract}



\section{Introduction}
\label{sec:intro}

In industrial applications of machine learning, model development is typically an iterative process, involving multiple trials of offline training and online experimentation. For example, a content streaming service might explore various recommendation strategies in a series of A/B tests. The data that is generated by this process---e.g., impression and interaction logs---can be used to augment training data and further refine a model. However, learning from logged interactions poses two fundamental challenges: (1) the feedback obtained from interaction is always incomplete, since one only observes responses (usually referred to as \define{rewards}) for actions that were taken; (2) the distribution of observations is inherently biased by the \define{policy} that determined which action to take in each context.

This problem of learning from logged data has been studied under various names by various authors \citep{strehl:nips10,dudik:icml11,bottou:jmlr13,swaminathan:jmlr15}. We adopt the moniker \define{counterfactual risk minimization} (CRM), introduced by \citet{swaminathan:jmlr15}, though it is also known as \define{offline policy optimization} in the reinforcement learning literature. The goal of CRM is to learn a policy from data that was logged by a previous policy so as to maximize expected reward (alternatively, minimize risk) over draws of future contexts. Using an analysis based on Bennett's inequality, \citeauthor{swaminathan:jmlr15} derived an upper bound on the risk of a stochastic policy,\footnote{In a similar vein, \citet{strehl:nips10} proved a lower bound on the expected reward of a deterministic policy.} which motivated learning with variance-based regularization.

In this work, we study CRM from a Bayesian perspective, in which one's uncertainty over actions becomes uncertainty over hypotheses. We view a stochastic policy as a distribution over hypotheses, each of which is a mapping from contexts to actions. Our work bridges the gap between CRM, which has until now been approached from the frequentist perspective, and Bayesian methods, which are often used to balance exploration and exploitation in contextual bandit problems \citep{chapelle:nips11}.

Using a PAC-Bayesian analysis \citep{mcallester:colt03}, we prove an upper bound on the risk of a Bayesian policy trained on logged data. We then apply this bound to a class of Bayesian policies based on the mixed logit model. This analysis suggests an intuitive regularization strategy for Bayesian CRM based on the $L_2$ distance from the logging policy's parameters. Our \define{logging policy regularization} (LPR) is effectively similar to variance regularization, but simpler to implement and more computationally efficient. We derive two Bayesian CRM objectives based on LPR, one of which is convex. We also consider the scenario in which the logging policy is unknown. In this case, we propose a two-step procedure to learn the logging policy, and then use the learned parameters to regularize training a new policy. We prove a corresponding risk bound for this setting using a distribution-dependent prior.

We end with an empirical study of our theoretical results. First, we show that LPR outperforms standard $L_2$ regularization whenever the logging policy is better than a uniform distribution. Second, we show that LPR is competitive with variance regularization, and even outperforms it on certain problems. Finally, we demonstrate that it is indeed possible to learn the logging policy for LPR with negligible impact on performance. These findings establish LPR as a simple, effective method for Bayesian CRM.

\section{Preliminaries}
\label{sec:prelims}

Let $\Contexts$ denote a set of \define{contexts}, and $\Actions$ denote a finite set of $\NumActions$ discrete \define{actions}. We are interested in finding a \define{stochastic policy}, $\Policy : \Contexts \to \Simplex_{\NumActions}$, which maps $\Contexts$ to the probability simplex on $\NumActions$ vertices, denoted $\Simplex_{\NumActions}$; in other words, $\Policy$ defines a conditional probability distribution over actions given contexts, from which we can sample actions. For a given context, $\context \in \Contexts$, we denote the conditional distribution on $\Actions$ by $\Policy(\context)$, and the probability mass of a particular action, $\action \in \Actions$, by $\Policy(\action \| \context)$.

Each action is associated with a stochastic, contextual \define{reward}, given by an unknown function, $\Reward : \Contexts \times \Actions \to [0,1]$, which we assume is bounded. When an action is played in response to a context, we only observe the reward for said action. This type of incomplete feedback is commonly referred to as \define{bandit feedback}. We assume a stationary distribution, $\ContextDistribution$, over contexts and reward functions. Our goal will be to find a policy that maximizes the expected reward over draws of $(\context, \Reward) \by \ContextDistribution$ and $\action \by \Policy(\context)$; or, put differently, one that minimizes the \define{risk},
\begin{equation}
\Risk(\Policy) \defeq 1 - \Ep_{(\context, \Reward) \by \ContextDistribution} \Ep_{\action \by \Policy(\context)} \left[ \Reward(\context, \action) \right] .
\label{eq:risk}
\end{equation}

We assume that we have access to a dataset of logged observations (i.e., examples), $\Data \defeq ( \context_i, \action_i, \prob_i, \reward_i )_{i=1}^{\DataSize}$, where $(\context_i, \Reward)$ were sampled from $\ContextDistribution$; action $\action_i$ was sampled with probability $\prob_i \defeq \LoggingPolicy(\action_i \| \context_i)$ from a fixed \define{logging policy}, $\LoggingPolicy$; and reward $\reward_i \defeq \Reward(\context_i, \action_i)$ was observed. The distribution of $\Data$, which we denote by $(\ContextDistribution \times \LoggingPolicy)^{\DataSize}$, is biased by the logging policy, since we only observe rewards for actions that were sampled from its distribution, which may be far from uniform. However, we assume that $\LoggingPolicy$ has \define{full support}---that is, $\LoggingPolicy(\action \| \context) > 0$ for all $\context \in \Contexts$ and $\action \in \Actions$---which implies
\begin{equation}
\Ep_{\action \by \LoggingPolicy(\context)} \left[ \Reward(\context, \action) \, \frac{\Policy(\action \| \context)}{\LoggingPolicy(\action \| \context)} \right]
	= \Ep_{\action \by \Policy(\context)} \left[ \Reward(\context, \action) \right] .
\label{eq:ips_unbiased}
\end{equation}
We can therefore obtain an unbiased estimate of $\Risk(\Policy)$ by scaling each reward by its \define{inverse propensity score} (IPS) \citep{horvitz:jasa52,rosenbaum:biometrika83}, $\prob_i^{-1}$, which yields the \define{IPS estimator},
\begin{equation}
\EmpRisk(\Policy, \Data) \defeq 1 - \frac{1}{\DataSize} \sum_{i=1}^{\DataSize} \reward_i \, \frac{\Policy(\action_i \| \context_i)}{\prob_i} .
\label{eq:emp_risk}
\end{equation}
Unfortunately, without additional assumptions on $\Policy$ and $\LoggingPolicy$, IPS has unbounded variance. This issue can be mitigated by \define{truncating} (or \define{clipping}) $\prob_i$ to the interval $[\MinPS,1]$ (as proposed in \citep{strehl:nips10}), yielding
\begin{equation}
\ClipEmpRisk(\Policy, \Data) \defeq 1 - \frac{1}{\DataSize} \sum_{i=1}^{\DataSize} \reward_i \, \frac{\Policy(\action_i \| \context_i)}{\max\{ \prob_i, \MinPS \}} ,
\label{eq:clipped_emp_risk}
\end{equation}
which we will sometimes refer to as the \define{empirical risk}. This estimator has finite variance, at the cost of adding bias. Crucially, since $\max\{ \prob_i, \MinPS \} \geq \prob_i$, we have that $\ClipEmpRisk(\Policy, \Data) \geq \EmpRisk(\Policy, \Data)$, which implies
\begin{equation}
\Ep_{\Data \by (\ContextDistribution \times \LoggingPolicy)^{\DataSize}}\left[ \ClipEmpRisk(\Policy, \Data) \right]
	\geq \Ep_{\Data \by (\ContextDistribution \times \LoggingPolicy)^{\DataSize}}\left[ \EmpRisk(\Policy, \Data) \right]
	= \Risk(\Policy) .
\label{eq:clipped_emp_risk_lb_risk}
\end{equation}
Thus, if $\ClipEmpRisk(\Policy, \Data)$ concentrates around its mean, then by minimizing $\ClipEmpRisk(\dummyvar, \Data)$, we minimize a probabilistic upper bound on the risk.

\begin{remark}
\label{rem:clipping}
There are other estimators we can consider. For instance, we could truncate the ratio of the policy and the logging policy, $\min\{ \Policy(\action_i \| \context_i) / \prob_i, \MinPS^{-1} \}$ \citep{ionides:jcgs08,swaminathan:jmlr15}. However, this form of truncation is incompatible with our subsequent analysis because the policy is inside the $\min$ operator. Avoiding truncation altogether, we could use the \define{self-normalizing} estimator \citep{swaminathan:nips15}, but this is also incompatible with our analysis, since the estimator does not decompose as a sum of i.i.d.\ random variables. Finally, we note that our theory \emph{does} apply, with small modifications, to the \define{doubly-robust} estimator \citep{dudik:icml11}.
\end{remark}

\subsection{Counterfactual Risk Minimization}
\label{sec:crm}

Our work is heavily influenced by \citet{swaminathan:jmlr15}, who coined the term \define{counterfactual risk minimization} (CRM) to refer to the problem of learning a policy from logged bandit feedback by minimizing an upper bound on the risk. Their bound is a function of the truncated IPS estimator,\footnote{Though \citeauthor{swaminathan:jmlr15} used $\min\{ \Policy(\action_i \| \context_i) / \prob_i, \MinPS^{-1} \}$ truncation, their results nonetheless hold for $\max\{ \prob_i, \MinPS \}^{-1}$ truncation.} the sample variance of the truncated IPS-weighted rewards under the policy, $\SampleVar_{\MinPS}(\Policy, \Data)$, and a measure of the complexity, $\Complexity : \Policies \to \Reals_+$, of the class of policies being considered, $\Policies \subseteq \{ \Policy : \Contexts \to \Simplex_{\NumActions} \}$. Ignoring constants, their bound is of the form
\begin{equation}
\Risk(\Policy)
	\leq \ClipEmpRisk(\Policy, \Data) + \BigO\( \sqrt{\frac{\SampleVar_{\MinPS}(\Policy, \Data) \, \Complexity(\Policies)}{\DataSize}} + \frac{\Complexity(\Policies)}{\DataSize} \) .
\label{eq:crm_bound}
\end{equation}
When $\SampleVar_{\MinPS}(\Policy, \Data)$ is sufficiently small, the bound's dominating term is $\BigO(\DataSize^{-1})$, which is the so-called ``fast" learning rate. This motivates a variance-regularized learning objective,
\begin{equation}
\argmin_{\Policy \in \Policies} \, \ClipEmpRisk(\Policy, \Data) + \regparam \sqrt{\frac{\SampleVar_{\MinPS}(\Policy, \Data)}{\DataSize}} ,
\label{eq:var_reg_obj}
\end{equation}
for a regularization parameter, $\regparam > 0$. \citeauthor{swaminathan:jmlr15} propose a majorization-minimization algorithm---named \define{policy optimization for exponential models} (POEM)---to solve this optimization.

\section{PAC-Bayesian Analysis}
\label{sec:pac_bayes}

In this work, we view CRM from a Bayesian perspective. We consider stochastic policies whose action distributions are induced by distributions over \define{hypotheses}. Instead of sampling directly from a distribution on the action set, we sample from a distribution on a \define{hypothesis space}, $\Hypotheses \subseteq \{ \hypothesis : \Contexts \to \Actions \}$, in which each element is a deterministic mapping from contexts to actions.\footnote{This view of stochastic policies was also used by \citet{seldin:nips11} to analyze contextual bandits in the PAC-Bayes framework.} As such, for a distribution, $\Posterior$, on $\Hypotheses$, the probability of an action, $\action \in \Actions$, given a context, $\context \in \Contexts$, is the probability that a random hypothesis, $\hypothesis \by \Posterior$, maps $\context$ to $\action$; that is,
\begin{equation}
\Policy_{\Posterior}(\action \| \context)
	\defeq \Pr_{\hypothesis \by \Posterior}\{ \hypothesis(\context) = \action \}
	= \Ep_{\hypothesis \by \Posterior}\left[ \1\{ \hypothesis(\context) = \action \} \right] .
\label{eq:bayesian_policy}
\end{equation}
Usually, the hypothesis space consists of functions of a certain parametric form, so the distribution is actually over the parameter values. We analyze one such class in \cref{sec:mixed_logit_model}.

To analyze Bayesian policies, we use the \define{PAC-Bayesian} framework (also known as simply \define{PAC-Bayes}) \citep{mcallester:colt99,langford:nips02,seeger:jmlr02,catoni:ims07,germain:icml09}. The PAC-Bayesian learning paradigm proceeds as follows: first, we fix a hypothesis space, $\Hypotheses$, and a \define{prior} distribution, $\Prior$, on $\Hypotheses$; then, we receive some data, $\Data$, drawn according to a fixed distribution; given $\Data$, we learn a we learn a \define{posterior} distribution, $\Posterior$, on $\Hypotheses$, from which we can sample hypotheses to classify new instances. In our PAC-Bayesian formulation of CRM, the learned posterior becomes our stochastic policy (\cref{eq:bayesian_policy}). Given a context, $\context \in \Contexts$, we sample an action by sampling $\hypothesis \by \Posterior$ (independent of $\context$) and returning $\hypothesis(\context)$. (In PAC-Bayesian terminology, this procedure is often referred to as the \define{Gibbs classifier}.) 

\begin{remark}
Instead of sampling actions via a posterior over hypotheses, we could equivalently sample policies from a posterior over policies, $\{ \Policy : \Contexts \to \Simplex_{\NumActions} \}$, then sample actions from said policies. The Bayesian policy would then be the expected policy, $\bar{\Policy}_{\Posterior}(\action \| \context) \defeq \Ep_{\Policy \by \Posterior}[ \Policy(\action \| \context) ]$. That said, it is more traditional in PAC-Bayes---and perhaps more flexible---to think in terms of the Gibbs classifier, which directly maps contexts to actions.
\end{remark}

It is important to note that the choice of prior cannot depend on the training data; however, \emph{the prior can generate the data}. Indeed, we can generate $\Data$ by 
sampling $(\context_i, \Reward) \by \ContextDistribution$, $\hypothesis \by \Prior$ and logging $\( \context_i, \hypothesis(\context_i), \Policy_0(\hypothesis(\context_i) \| \context_i), \Reward(\context_i, \hypothesis(\context_i)) \)$, for $i = 1,\dots,\DataSize$. Thus, in the PAC-Bayesian formulation of CRM, \emph{the prior can be the logging policy}. We elaborate on this idea in \cref{sec:mixed_logit_model}.

\subsection{Risk Bounds}
\label{sec:pac_bayes_bounds}

The heart of our analysis is an application of the PAC-Bayesian theorem---a generalization bound for Bayesian learning---to upper-bound the risk. The particular PAC-Bayesian bound we use is by \citet{mcallester:colt03}.
\begin{lemma}
\label{lem:pac_bayes_theorem}
Let $\DataDistribution$ denote a fixed distribution on an instance space, $\DataSpace$. Let $\Loss : \Hypotheses \times \DataSpace \to [0,1]$ denote a \define{loss function}. For a distribution, $\Posterior$, on the hypothesis space, $\Hypotheses$, and a dataset, $\Data \defeq (\datum_1,\dots,\datum_{\DataSize}) \in \DataSpace^{\DataSize}$, let $\Risk(\Posterior) \defeq \Ep_{\hypothesis \by \Posterior} \Ep_{\datum \by \DataDistribution} [ \Loss(\hypothesis, \datum) ]$ and $\EmpRisk(\Posterior, \Data) \defeq \Ep_{\hypothesis \by \Posterior} \big[ \frac{1}{\DataSize} \sum_{i=1}^{\DataSize} \Loss(\hypothesis, \datum_i) \big]$ denote the risk and empirical risk, respectively. For any $\DataSize \geq 1$, $\delta \in (0,1)$, and fixed prior, $\Prior$, on $\Hypotheses$, with probability at least $1 - \delta$ over draws of $\Data \by \DataDistribution^{\DataSize}$, the following holds simultaneously for all posteriors, $\Posterior$, on $\Hypotheses$:
\begin{equation}
\Risk(\Posterior)
	\leq \EmpRisk(\Posterior, \Data)
		+ \sqrt{ \frac{ 2 \EmpRisk(\Posterior, \Data) \( \KLDiv{\Posterior}{\Prior} + \ln\frac{\DataSize}{\delta} \) }{ \DataSize - 1 } }
		+ \frac{ 2 \( \KLDiv{\Posterior}{\Prior} + \ln\frac{\DataSize}{\delta} \) }{ \DataSize - 1 } .
\label{eq:pac_bayes_theorem}
\end{equation}
\end{lemma}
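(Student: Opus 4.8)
The statement is the PAC-Bayesian bound of \citet{mcallester:colt03}, so the plan is to follow the standard route: first establish the \emph{$\mathrm{kl}$-form} of the bound, then relax it to the closed form in \cref{eq:pac_bayes_theorem}. For a single hypothesis $\hypothesis$, write $\Risk(\hypothesis) \defeq \Ep_{\datum \by \DataDistribution}[\Loss(\hypothesis, \datum)]$ and $\EmpRisk(\hypothesis, \Data) \defeq \frac{1}{\DataSize} \sum_{i=1}^{\DataSize} \Loss(\hypothesis, \datum_i)$, so that $\Risk(\Posterior)$ and $\EmpRisk(\Posterior, \Data)$ are their $\Posterior$-averages; and for $q, p \in [0,1]$, let $\mathrm{kl}(q \,\|\, p) \defeq q \ln\frac{q}{p} + (1-q) \ln\frac{1-q}{1-p}$ denote the relative entropy between two Bernoulli distributions. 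The intermediate target is to show that, with probability at least $1-\delta$ over $\Data \by \DataDistribution^{\DataSize}$, simultaneously for all posteriors $\Posterior$,
\[
\mathrm{kl}\big( \EmpRisk(\Posterior, \Data) \,\|\, \Risk(\Posterior) \big) \leq \frac{ \KLDiv{\Posterior}{\Prior} + \ln\frac{\DataSize}{\delta} }{ \DataSize - 1 } .
\]

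To establish this $\mathrm{kl}$-form, I would combine four ingredients. (i) The change-of-measure (Donsker--Varadhan) inequality: for any $\phi : \Hypotheses \to \Reals$, $\Ep_{\hypothesis \by \Posterior}[\phi(\hypothesis)] \leq \KLDiv{\Posterior}{\Prior} + \ln \Ep_{\hypothesis \by \Prior}[ e^{\phi(\hypothesis)} ]$, applied with $\phi(\hypothesis) \defeq (\DataSize - 1)\, \mathrm{kl}\big(\EmpRisk(\hypothesis, \Data) \,\|\, \Risk(\hypothesis)\big)$. (ii) A Maurer-type moment lemma which, exploiting that $\Prior$ is chosen independently of $\Data$, lets me exchange expectations and bound the prior-averaged exponential moment: for each fixed $\hypothesis$, $\Ep_{\Data \by \DataDistribution^{\DataSize}}\big[ e^{(\DataSize - 1)\, \mathrm{kl}(\EmpRisk(\hypothesis, \Data)\,\|\,\Risk(\hypothesis))} \big] \leq \DataSize$. (iii) Markov's inequality applied to the nonnegative, $\Posterior$-independent random variable $\Ep_{\hypothesis \by \Prior}[ e^{\phi(\hypothesis)} ]$, which gives $\ln \Ep_{\hypothesis \by \Prior}[ e^{\phi(\hypothesis)} ] \leq \ln\frac{\DataSize}{\delta}$ with probability at least $1-\delta$; because this event does not depend on $\Posterior$, the resulting bound holds simultaneously for all posteriors. (iv) Jensen's inequality together with the joint convexity of $\mathrm{kl}$, giving $\Ep_{\hypothesis \by \Posterior}[ \mathrm{kl}(\EmpRisk(\hypothesis, \Data)\,\|\,\Risk(\hypothesis)) ] \geq \mathrm{kl}( \EmpRisk(\Posterior, \Data) \,\|\, \Risk(\Posterior) )$. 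Chaining (i)--(iv) and dividing by $\DataSize - 1$ yields the displayed $\mathrm{kl}$-form.

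The final step is a deterministic inversion of the $\mathrm{kl}$-form. Using the lower bound $\mathrm{kl}(q \,\|\, p) \geq (p-q)^2 / (2p)$ (valid for $p \geq q$) and solving the resulting quadratic in $p - q$, one obtains that $\mathrm{kl}(q \,\|\, p) \leq x$ implies $p \leq q + \sqrt{2 q x} + 2x$ (the inequality is trivial when $p < q$, since its right-hand side already exceeds $q$). Substituting $q = \EmpRisk(\Posterior, \Data)$, $p = \Risk(\Posterior)$, and $x = (\KLDiv{\Posterior}{\Prior} + \ln\frac{\DataSize}{\delta})/(\DataSize - 1)$ reproduces \cref{eq:pac_bayes_theorem} verbatim.

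I expect the only genuinely technical obstacle to be ingredient (ii), the moment lemma. For fixed $\hypothesis$, the quantity $\DataSize\, \EmpRisk(\hypothesis, \Data)$ is a sum of $\DataSize$ independent $[0,1]$-valued random variables with mean $\DataSize\, \Risk(\hypothesis)$, so the lemma is a sharp statement about the exponential moment of the binary relative entropy between an empirical frequency and its expectation. The cleanest proof treats the Bernoulli case via the method of types (a direct combinatorial count over the $\DataSize + 1$ possible values of the empirical mean) and then reduces bounded losses to the Bernoulli case; alternatively, this moment bound can simply be imported from \citet{mcallester:colt03}. The surrounding steps---change of measure, Markov, Jensen, and the quadratic inversion---are routine.
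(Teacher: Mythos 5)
The paper contains no proof of this lemma for you to be compared against: it is stated as an imported result, cited directly from \citet{mcallester:colt03}, so the only question is whether your reconstruction is sound. It is. Your four-ingredient chain (change of measure with $\phi(\hypothesis) = (\DataSize-1)\,\mathrm{kl}(\EmpRisk(\hypothesis,\Data)\,\|\,\Risk(\hypothesis))$, the exponential-moment bound, Markov applied to the $\Posterior$-independent random variable $\Ep_{\hypothesis \by \Prior}[e^{\phi(\hypothesis)}]$, and Jensen via joint convexity of $\mathrm{kl}$) is exactly the standard derivation of the intermediate $\mathrm{kl}$-form, and your relaxation $\mathrm{kl}(q\,\|\,p) \geq (p-q)^2/(2p)$ for $p \geq q$, followed by solving the quadratic in $p-q$, is exactly how that form is converted to the additive bound stated in \cref{eq:pac_bayes_theorem}. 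The one place where care is genuinely needed is the constant in ingredient (ii), and your claim does hold: the method-of-types count gives $\Ep_{\Data}\big[e^{(\DataSize-1)\,\mathrm{kl}(\EmpRisk(\hypothesis,\Data)\,\|\,\Risk(\hypothesis))}\big] \leq \sum_{k=0}^{\DataSize} e^{-\mathrm{kl}(k/\DataSize\,\|\,p)}$, and the two endpoint terms ($k=0$ and $k=\DataSize$) equal $1-p$ and $p$ and hence sum to one, while the remaining $\DataSize - 1$ terms are each at most one, giving the bound $\DataSize$ (not $\DataSize+1$) that is required to produce $\ln\frac{\DataSize}{\delta}$ over $\DataSize-1$; the reduction from $[0,1]$-valued losses to the Bernoulli case by coordinatewise convexity is also standard. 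One small attribution quibble: the moment bound with exponent $\DataSize-1$ and constant $\DataSize$ is the Langford--Seeger form used by McAllester, whereas Maurer's lemma has exponent $\DataSize$ and constant $2\sqrt{\DataSize}$ (which would yield a slightly different logarithmic term), so ``Maurer-type'' is a harmless but imprecise label for what you actually need.
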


The hallmark of a PAC-Bayesian bound is the KL divergence from the fixed prior to a learned posterior. This quantity can be interpreted as a complexity measure, similar to the VC dimension, covering number or Rademacher complexity \citep{mohri:book12}. The divergence penalizes posteriors that stray from the prior, effectively penalizing overfitting.

One attractive property of \citeauthor{mcallester:colt03}'s bound is that, if the empirical risk is sufficiently small, then the generalization error, $\Risk(\Posterior) - \EmpRisk(\Posterior, \Data)$, can be of order $\BigO(\DataSize^{-1})$. Thus, the bound captures both realizable and non-realizable learning problems.

We use \cref{lem:pac_bayes_theorem} to prove the following risk bound.
\begin{theorem}
\label{th:pac_bayes_bound}
Let $\Hypotheses \subseteq \{ \hypothesis : \Contexts \to \Actions \}$ denote a hypothesis space mapping contexts to actions. For any $\DataSize \geq 1$, $\delta \in (0,1)$, $\MinPS \in (0,1)$ and fixed prior, $\Prior$, on $\Hypotheses$, with probability at least $1 - \delta$ over draws of $\Data \by (\ContextDistribution \times \LoggingPolicy)^{\DataSize}$, the following holds simultaneously for all posteriors, $\Posterior$, on $\Hypotheses$:
\begin{align}
\Risk(\Policy_{\Posterior})
	\leq \ClipEmpRisk(\Policy_{\Posterior}, \Data)
		&+ \sqrt{ \frac{ 2 \big( \ClipEmpRisk(\Policy_{\Posterior}, \Data) - 1 + \frac{1}{\MinPS} \big) \( \KLDiv{\Posterior}{\Prior} + \ln\frac{\DataSize}{\delta} \) }{ \MinPS \( \DataSize - 1 \) } } \\
		&+ \frac{ 2 \( \KLDiv{\Posterior}{\Prior} + \ln\frac{\DataSize}{\delta} \) }{ \MinPS \( \DataSize - 1 \) } .
\label{eq:pac_bayes_bound}
\end{align}
\end{theorem}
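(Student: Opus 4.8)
\section*{Proof proposal}

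The plan is to reduce \cref{th:pac_bayes_bound} to \cref{lem:pac_bayes_theorem} by constructing a $[0,1]$-valued loss that absorbs both the inverse propensity scaling and the truncation. Since the truncated IPS-weighted reward $\reward\,\1\{\hypothesis(\context)=\action\}/\max\{\prob,\MinPS\}$ lies in $[0,\MinPS^{-1}]$, multiplying by $\MinPS$ rescales it into $[0,1]$. Accordingly, I would take the instance space to be $\DataSpace \defeq \Contexts \times \Actions \times (0,1] \times [0,1]$ with a datum $\datum \defeq (\context,\action,\prob,\reward)$, fix the data distribution $\DataDistribution \defeq \ContextDistribution \times \LoggingPolicy$, and define
\begin{equation*}
\Loss(\hypothesis, \datum) \defeq 1 - \MinPS \, \reward \, \frac{\1\{\hypothesis(\context) = \action\}}{\max\{\prob, \MinPS\}} .
\end{equation*}
One checks immediately that $\Loss(\hypothesis, \datum) \in [0,1]$, so \cref{lem:pac_bayes_theorem} applies verbatim to this loss, the prior $\Prior$, and any posterior $\Posterior$.

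The core of the argument is to re-express the two risk quantities of \cref{lem:pac_bayes_theorem} in terms of the CRM quantities. For the empirical risk, I would expand $\Policy_{\Posterior}(\action_i \| \context_i) = \Ep_{\hypothesis \by \Posterior}[\1\{\hypothesis(\context_i) = \action_i\}]$ via \cref{eq:bayesian_policy} and exchange the posterior expectation with the empirical average, which yields the exact identity $\EmpRisk(\Posterior, \Data) = 1 - \MinPS\,(1 - \ClipEmpRisk(\Policy_{\Posterior}, \Data))$. For the true risk, the same expansion together with $\Data \by (\ContextDistribution \times \LoggingPolicy)^{\DataSize}$ gives $\Risk(\Posterior) = 1 - \MinPS\,(1 - \Ep_{\Data}[\ClipEmpRisk(\Policy_{\Posterior}, \Data)])$, and then \cref{eq:clipped_emp_risk_lb_risk} applied to $\Policy_{\Posterior}$ (the clipping-bias inequality $\Ep_{\Data}[\ClipEmpRisk(\Policy_{\Posterior}, \Data)] \geq \Risk(\Policy_{\Posterior})$) turns this into the one-sided relation
\begin{equation*}
\Risk(\Posterior) \geq 1 - \MinPS + \MinPS\,\Risk(\Policy_{\Posterior}) ,
\end{equation*}
which rearranges to $\Risk(\Policy_{\Posterior}) \leq (\Risk(\Posterior) - 1 + \MinPS)/\MinPS$. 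The direction here is essential: the rescaling must produce a \emph{lower} bound on $\Risk(\Posterior)$ so that inverting the affine map yields an \emph{upper} bound on $\Risk(\Policy_{\Posterior})$.

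To finish, I would invoke \cref{lem:pac_bayes_theorem} to bound $\Risk(\Posterior)$ by $\EmpRisk(\Posterior, \Data)$ plus its two complexity terms, substitute the empirical-risk identity above, and chain with the rearranged inequality for $\Risk(\Policy_{\Posterior})$. The constant $1 - \MinPS$ cancels against the shift, leaving $\ClipEmpRisk(\Policy_{\Posterior}, \Data)$ as the leading term; dividing the three terms by $\MinPS$ and factoring $1 - \MinPS + \MinPS\,\ClipEmpRisk(\Policy_{\Posterior}, \Data) = \MinPS\,(\ClipEmpRisk(\Policy_{\Posterior}, \Data) - 1 + \MinPS^{-1})$ inside the square root cancels one power of $\MinPS$ and reproduces exactly the $\MinPS(\DataSize-1)$ denominators appearing in \cref{eq:pac_bayes_bound}. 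The main obstacle is bookkeeping rather than conceptual: one must carry the affine reparametrization $t \mapsto 1 - \MinPS(1 - t)$ consistently through both the empirical and population risks and verify that the inequality in \cref{eq:clipped_emp_risk_lb_risk} points the right way after rescaling, so that the final algebra collapses to the claimed closed form.
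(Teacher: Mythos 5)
Your proposal is correct and takes essentially the same route as the paper's proof: the identical rescaled loss $\ClipLoss(\hypothesis, \context, \action, \prob, \reward) = 1 - \MinPS \, \reward \, \1\{ \hypothesis(\context) = \action \} / \max\{ \prob, \MinPS \}$, the same application of \cref{lem:pac_bayes_theorem}, the same affine identity for the empirical risk, and the same truncation-bias inequality to relate the Gibbs risk to $\Risk(\Policy_{\Posterior})$, followed by the same algebra. The only cosmetic difference is that you obtain the population-level inequality by taking expectations of the empirical identity and invoking \cref{eq:clipped_emp_risk_lb_risk}, whereas the paper derives it directly via linearity of expectation and the pointwise bound $\LoggingPolicy(\action \| \context) / \max\{ \LoggingPolicy(\action \| \context), \MinPS \} \leq 1$; the underlying fact is identical.
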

\begin{proof}
To apply \cref{lem:pac_bayes_theorem}, we need to define an appropriate loss function for CRM. It should be expressed as a function of a hypothesis and a single example,\footnote{This criterion ensures that the (empirical) risk decomposes as a sum of i.i.d.\ random variables, which is our motivation for using the truncated IPS estimator over the self-normalizing estimator \citep{swaminathan:nips15}; the latter does not decompose.} and bounded in $[0,1]$. Accordingly, we define
\begin{equation}
\ClipLoss(\hypothesis, \context, \action, \prob, \reward)
	\defeq 1 - \MinPS \, \reward \, \frac{ \1\{ \hypothesis(\context) = \action \} }{ \max\{ \prob, \MinPS \} } ,
\label{eq:loss}
\end{equation}
which satisfies these criteria. Using this loss function, we let
\begin{align}
\label{eq:gibbs_risk}
&\ClipRisk(\Posterior) \defeq \Ep_{\hypothesis \by \Posterior} \Ep_{(\context, \Reward) \by \ContextDistribution} \Ep_{\action \by \LoggingPolicy(\context)} [ \ClipLoss( \hypothesis, \context, \action, \LoggingPolicy(\action \| \context), \Reward(\context, \action) ) ] \\
\eqand
\label{eq:gibbs_emp_risk}
&\ClipEmpRisk(\Posterior, \Data) \defeq \Ep_{\hypothesis \by \Posterior} \left[ \frac{1}{\DataSize} \sum_{i=1}^{\DataSize} \ClipLoss(\hypothesis, \context_i, \action_i, \prob_i, \reward_i) \right] .
\end{align}
Importantly, $\ClipEmpRisk(\Posterior, \Data)$ is an unbiased estimate of $\ClipRisk(\Posterior)$,
\begin{equation}
\Ep_{\Data \by (\ContextDistribution \times \LoggingPolicy)^{\DataSize}}[ \ClipEmpRisk(\Posterior, \Data) ] = \ClipRisk(\Posterior) ,
\label{eq:gibbs_emp_risk_unbiased}
\end{equation}
and a draw of $\hypothesis \by \Posterior$ does not depend on context, so $\ClipRisk(\Posterior)$ and $\ClipEmpRisk(\Posterior, \Data)$ can be expressed as expectations over $\hypothesis \by \Posterior$.\footnote{This is why we truncate with $\max\{ \prob_i, \MinPS \}^{-1}$ instead of $\min\{ \Policy(\action_i \| \context_i) / \prob_i, \MinPS^{-1} \}$.} Further, via linearity of expectation,
\begin{align}
\ClipRisk(\Posterior)
	&= 1 - \MinPS \Ep_{(\context, \Reward) \by \ContextDistribution} \Ep_{\action \by \LoggingPolicy(\context)} \left[ \Reward(\context, \action) \, \frac{ \Ep_{\hypothesis \by \Posterior}\left[ \1\{ \hypothesis(\context) = \action \} \right] }{ \max\{ \LoggingPolicy(\action \| \context), \MinPS \} } \right] \\
	&= 1 - \MinPS \Ep_{(\context, \Reward) \by \ContextDistribution} \Ep_{\action \by \LoggingPolicy(\context)} \left[ \Reward(\context, \action) \, \frac{ \Policy_{\Posterior}(\action \| \context) }{ \max\{ \LoggingPolicy(\action \| \context), \MinPS \} } \right] \\
	&\geq 1 - \MinPS \Ep_{(\context, \Reward) \by \ContextDistribution} \Ep_{\action \by \Policy_{\Posterior}(\context)} \left[ \Reward(\context, \action) \right] \\
	&= 1 - \MinPS \( 1 - \Risk(\Policy_{\Posterior}) \) ,
\label{eq:gibbs_risk_lb}
\end{align}
and
\begin{align}
\ClipEmpRisk(\Posterior, \Data)
	&= 1 - \frac{\MinPS}{\DataSize} \sum_{i=1}^{\DataSize} \reward_i \, \frac{ \Ep_{\hypothesis \by \Posterior} \left[ \1\{ \hypothesis(\context_i) = \action_i \} \right] }{ \max\{ \prob_i, \MinPS \} } \\
	&= 1 - \frac{\MinPS}{\DataSize} \sum_{i=1}^{\DataSize} \reward_i \, \frac{ \Policy_{\Posterior}(\action_i \| \context_i) }{ \max\{ \prob_i, \MinPS \} } \\
	&= 1 - \MinPS \( 1 - \ClipEmpRisk(\Policy_{\Posterior}, \Data) \) .
\label{eq:gibbs_emp_risk_equiv}
\end{align}
Thus,
\begin{equation}
\ClipRisk(\Posterior) - \ClipEmpRisk(\Posterior, \Data)
	\geq \MinPS \big( \Risk(\Policy_{\Posterior}) - \ClipEmpRisk(\Policy_{\Posterior}, \Data) \big) ,
\label{eq:gibbs_generr_lb}
\end{equation}
which means that \cref{lem:pac_bayes_theorem} can be used to upper-bound $\Risk(\Policy_{\Posterior}) - \ClipEmpRisk(\Policy_{\Posterior}, \Data)$.
\end{proof}

It is important to note that the truncated IPS estimator, $\ClipEmpRisk$, can be negative, achieving its minimum at $1 - \MinPS^{-1}$. This means that when $\ClipEmpRisk$ is minimized, the middle $\BigO(\DataSize^{-1/2})$ term disappears and the $\BigO(\DataSize^{-1})$ term dominates the bound, yielding the ``fast" learning rate. That said, our bound may not be as tight as \citeauthor{swaminathan:jmlr15}' (\cref{eq:crm_bound}), since the variance is sometimes smaller than the mean. To achieve a similar rate, we could perhaps use a PAC-Bayesian Bernstein inequality \citep{seldin:it12,tolstikhin:nips13}.

\cref{th:pac_bayes_bound} assumes that the truncation parameter, $\MinPS$, is fixed \emph{a priori}. However, using a covering technique, we can derive a risk bound that holds for all $\MinPS$ simultaneously---meaning, $\MinPS$ can be data-dependent, such as the $10\nth$ percentile of the logged propensities.
\begin{theorem}
\label{th:pac_bayes_bound_allclipping}
Let $\Hypotheses \subseteq \{ \hypothesis : \Contexts \to \Actions \}$ denote a hypothesis space mapping contexts to actions. For any $\DataSize \geq 1$, $\delta \in (0,1)$ and fixed prior, $\Prior$, on $\Hypotheses$, with probability at least $1 - \delta$ over draws of $\Data \by (\ContextDistribution \times \LoggingPolicy)^{\DataSize}$, the following holds simultaneously for all posteriors, $\Posterior$, on $\Hypotheses$, and all $\MinPS \in (0,1)$:
\begin{align}
\Risk(\Policy_{\Posterior})
	\leq \ClipEmpRisk(\Policy_{\Posterior}, \Data)
		&+ \sqrt{ \frac{ 4 \big( \ClipEmpRisk(\Policy_{\Posterior}, \Data) - 1 + \frac{2}{\MinPS} \big) \( \KLDiv{\Posterior}{\Prior} + \ln\frac{2 \DataSize}{\delta \MinPS} \) }{ \MinPS \( \DataSize - 1 \) } } \\
		&+ \frac{ 4 \( \KLDiv{\Posterior}{\Prior} + \ln\frac{2 \DataSize}{\delta \MinPS} \) }{ \MinPS \( \DataSize - 1 \) } .
\label{eq:pac_bayes_bound_allclipping}
\end{align}
\end{theorem}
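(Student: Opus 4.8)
The plan is to discretize the continuum of truncation levels, apply \cref{th:pac_bayes_bound} on a countable grid, and then relate an arbitrary $\MinPS$ to a nearby grid point—exactly the covering argument alluded to before the statement. First I would fix the geometric grid $\MinPS_k \defeq 2^{-k}$ for $k = 1, 2, \dots$, and instantiate \cref{th:pac_bayes_bound} at each $\MinPS_k$ with its own confidence parameter $\delta_k \defeq \delta \, \MinPS_k = \delta \, 2^{-k}$. Because \cref{th:pac_bayes_bound} already holds simultaneously over all posteriors $\Posterior$, and because $\sum_{k=1}^{\infty} \delta_k = \delta$, a union bound over $k$ guarantees that, with probability at least $1-\delta$ over draws of $\Data$, the conclusion of \cref{th:pac_bayes_bound} holds for every posterior $\Posterior$ and every grid point $\MinPS_k$ at once.

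Next, given an arbitrary $\MinPS \in (0,1)$, I would select the largest grid point not exceeding it, i.e.\ $\MinPS_k$ with the smallest index $k$ such that $\MinPS_k \leq \MinPS$. The ratio-$2$ spacing then yields $\MinPS / 2 < \MinPS_k \leq \MinPS$, equivalently $\frac{1}{\MinPS} \leq \frac{1}{\MinPS_k} < \frac{2}{\MinPS}$, and consequently $\delta_k = \delta \, \MinPS_k > \frac{\delta \MinPS}{2}$, which gives $\ln\frac{\DataSize}{\delta_k} < \ln\frac{2\DataSize}{\delta \MinPS}$. Since the risk $\Risk(\Policy_{\Posterior})$ carries no dependence on the truncation level, it suffices to show that the valid bound of \cref{th:pac_bayes_bound} evaluated at $(\MinPS_k, \delta_k)$ is dominated by the claimed bound \eqref{eq:pac_bayes_bound_allclipping} evaluated at $\MinPS$; combined with the union bound, this completes the argument uniformly in $\MinPS$.

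For the domination step I would compare the three summands of \eqref{eq:pac_bayes_bound} against those of \eqref{eq:pac_bayes_bound_allclipping} factor by factor, using two elementary facts. First, $\ClipEmpRisk(\Policy_{\Posterior}, \Data)$ is nondecreasing in the truncation level: raising $\MinPS$ can only increase each $\max\{\prob_i, \MinPS\}$, hence decrease the subtracted IPS-weighted reward, so the empirical risk at $\MinPS_k$ is at most the empirical risk at $\MinPS$ and the leading terms compare correctly. Second, the inequalities $\frac{1}{\MinPS_k} < \frac{2}{\MinPS}$ and $\delta_k > \frac{\delta\MinPS}{2}$ inflate the grid-point quantities $\frac{1}{\MinPS_k}$, $\frac{2}{\MinPS_k(\DataSize-1)}$, and $\ln\frac{\DataSize}{\delta_k}$ into the slightly larger $\frac{2}{\MinPS}$, $\frac{4}{\MinPS(\DataSize-1)}$, and $\ln\frac{2\DataSize}{\delta\MinPS}$—this is precisely where the factor-of-two inflations in the statement ($2 \to 4$ and $\frac{1}{\MinPS} \to \frac{2}{\MinPS}$) originate. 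Because every factor under the square root is nonnegative—in particular $\ClipEmpRisk(\Policy_{\Posterior}, \Data) - 1 + \frac{1}{\MinPS_k} \geq 0$, since the truncated estimator is bounded below by $1 - \MinPS_k^{-1}$—the termwise comparison preserves the inequality through the $\sqrt{\cdot}$.

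I expect the main obstacle to be bookkeeping rather than anything conceptual: one must choose the grid ratio and the weights $\delta_k$ so that the $\ln(1/\delta_k)$ penalty lands no larger than $\ln\frac{2\DataSize}{\delta\MinPS}$ while the $\delta_k$ still sum to $\delta$, and simultaneously so that the spacing forces $\MinPS_k > \MinPS/2$ and thus controls $\frac{1}{\MinPS_k}$ by $\frac{2}{\MinPS}$; the ratio-$2$ grid with $\delta_k \propto \MinPS_k$ threads both needles at once. The one genuinely delicate point is verifying nonnegativity of the factor $\ClipEmpRisk(\Policy_{\Posterior}, \Data) - 1 + \MinPS_k^{-1}$, where I would explicitly invoke the lower bound $1 - \MinPS_k^{-1}$ on the truncated IPS estimator to ensure the termwise square-root comparison is legitimate.
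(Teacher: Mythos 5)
Your proposal is correct and follows essentially the same covering argument as the paper's own proof: the same dyadic grid $\MinPS_k = 2^{-k}$ with confidence allocation $\delta_k = \delta\,\MinPS_k$, the same union bound, the same selection of the largest grid point below the given $\MinPS$, and the same monotonicity and factor-of-two inflations yielding the stated constants. Your explicit verification that $\ClipEmpRisk(\Policy_{\Posterior}, \Data) - 1 + \MinPS_k^{-1} \geq 0$ before the termwise square-root comparison is a small point the paper leaves implicit, but it does not alter the argument.
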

\noindent
We defer the proof to \cref{sec:proof_pac_bayes_bound_allclipping}.

\cref{th:pac_bayes_bound,th:pac_bayes_bound_allclipping} hold for any fixed prior, but they have an intriguing interpretation when the prior is defined as the logging policy. In this case, one can minimize an upper bound on the risk by minimizing the empirical risk while keeping the learned policy close to the logging policy. We explore this idea, and its relationship to variance regularization and trust region methods, in the next section.

\section{Mixed Logit Models}
\label{sec:mixed_logit_model}

We will apply our PAC-Bayesian analysis to the following class of stochastic policies. We first define a hypothesis space,
\begin{equation}
\Hypotheses \defeq \{ \hypothesis_{\params, \gumbels} : \params \in \Reals^{\FeatDim}, \gumbels \in \Reals^{\NumActions} \} ,
\label{eq:mixed_logit_class}
\end{equation}
of functions of the form
\begin{equation}
\label{eq:sampler}
\hypothesis_{\params, \gumbels}(\context) \defeq \argmax_{\action \in \Actions} \, \params \cdot \Feats(\context, \action) + \gumbels_{\action} ,
\end{equation}
where $\Feats(\context, \action) \in \Reals^{\FeatDim}$ outputs features of the context and action, subject to a boundedness constraint, $\sup_{\context \in \Contexts, \action \in \Actions} \norm{\Feats(\context, \action)} \leq \FeatMaxNorm$. If each $\gumbels_{\action}$ is sampled from a \define{standard Gumbel} distribution, $\Gumbel(0,1)$ (location $0$, scale $1$), then $\hypothesis_{\params, \gumbels}(\context)$ produces a sample from a \define{softmax} policy,
\begin{equation}
\SoftmaxPolicy_{\params}(\action \| \context)
	\defeq \frac{ \exp( \params \cdot \Feats(\context, \action) ) }{ \sum_{\action' \in \Actions} \exp( \params \cdot \Feats(\context, \action') ) }
	= \Ep_{\gumbels \by \Gumbel(0,1)^{\NumActions}} \left[ \1\{ \hypothesis_{\params, \gumbels}(\context) = \action \} \right] .
\label{eq:softmax_policy}
\end{equation}
Further, if $\params$ is normally distributed, then $\hypothesis_{\params, \gumbels}(\context)$ has a \define{logistic-normal} distribution \citep{aitchison:biometrika80}.

We define the posterior, $\Posterior$, as a Gaussian over softmax parameters, $\params \by \Norm(\LearnedParams, \variance \mat{I})$, for some learned $\LearnedParams \in \Reals^{\FeatDim}$ and $\variance \in (0,\infty)$, with standard Gumbel perturbations, $\gumbels \by \Gumbel(0,1)^{\NumActions}$. As such, we have that
\begin{equation}
\Policy_{\Posterior}(\action \| \context)
	= \Ep_{(\params, \gumbels) \by \Posterior} \left[ \1\{ \hypothesis_{\params, \gumbels}(\context) = \action \} \right]
	= \Ep_{\params \by \Norm(\LearnedParams, \variance \mat{I})} \left[ \SoftmaxPolicy_{\params}(\action \| \context) \right] .
\label{eq:mixed_logit_policy}
\end{equation}
This model is alternately referred to as a \define{mixed logit} or \define{random parameter logit}.

We can define the prior in any way that seems reasonable---without access to training data, of course. In the absence of any prior knowledge, a logical choice of prior is the standard (zero mean, unit variance) multivariate normal distribution, with standard Gumbel perturbations. This prior corresponds to a Bayesian policy that takes uniformly random actions, and motivates standard $L_2$ regularization of $\LearnedParams$. However, we know that the data was generated by the logging policy, and this knowledge motivates a different kind of prior (hence, regularizer). If the logging policy performs better than a uniform action distribution---which we can verify empirically, using IPS reward estimation with the logs---then it makes sense to define the prior in terms of the logging policy.

Let us assume that the logging policy is known (we relax this assumption in \cref{sec:unknown_logging_policy}) and has a softmax form (\cref{eq:softmax_policy}), with parameters $\LoggingParams \in \Reals^{\FeatDim}$. We define the prior, $\Prior$, as an isotropic Gaussian centered at the logging policy's parameters, $\params \by \Norm(\LoggingParams, \variance_0 \mat{I})$, for some predetermined $\variance_0 \in (0,\infty)$, with standard Gumbel perturbations, $\gumbels \by \Gumbel(0,1)^{\NumActions}$. This prior encodes a belief that the logging policy, while not perfect, is a good starting point. Using the logging policy to define the prior does not violate the PAC-Bayes paradigm, since the logging policy is fixed before generating the training data. The Bayesian policy induced by this prior may not correspond to the actual logging policy, but we can define the prior any way we want.

\begin{remark}
\label{rem:gaussian}
We used isotropic covariances for the prior and posterior in order to simplify our analysis and presentation. That said, it is possible to use more complex covariance structures.
\end{remark}

\subsection{Bounding the KL Divergence}
\label{sec:bounding_kl}

The KL divergence between the above prior and posterior constructions motivates an interesting regularizer for CRM. To derive it, we upper-bound the KL divergence by a function of the model parameters.
\begin{lemma}
\label{lem:kl_div_bound}
For distributions $\Prior \defeq \Norm(\LoggingParams, \variance_0 \mat{I}) \times \Gumbel(0,1)^{\NumActions}$ and $\Posterior \defeq \Norm(\LearnedParams, \variance \mat{I}) \times \Gumbel(0,1)^{\NumActions}$, with $\LoggingParams, \LearnedParams \in \Reals^{\FeatDim}$ and $0 < \variance \leq \variance_0 < \infty$,
\begin{equation}
\KLDiv{\Posterior}{\Prior} \leq \frac{\norm{\LearnedParams - \LoggingParams}^2}{2 \variance_0} + \frac{\FeatDim}{2} \ln\frac{\variance_0}{\variance} .
\label{eq:kl_div_bound}
\end{equation}
\end{lemma}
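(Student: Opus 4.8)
The plan is to exploit the product structure of $\Prior$ and $\Posterior$ together with the closed-form expression for the KL divergence between two Gaussians. First I would observe that $\Prior$ and $\Posterior$ are each product measures whose second factor, $\Gumbel(0,1)^{\NumActions}$, is \emph{identical}. Since the KL divergence of a product of mutually independent distributions decomposes additively across factors, the shared Gumbel factor contributes zero, and only the Gaussian factors survive:
\begin{equation}
\KLDiv{\Posterior}{\Prior} = \KLDiv{\Norm(\LearnedParams, \variance \mat{I})}{\Norm(\LoggingParams, \variance_0 \mat{I})} .
\end{equation}

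Next I would invoke the standard formula for the KL divergence between two multivariate Gaussians and specialize it to the isotropic case. With covariances $\variance \mat{I}$ and $\variance_0 \mat{I}$ in $\Reals^{\FeatDim}$, this gives
\begin{equation}
\KLDiv{\Norm(\LearnedParams, \variance \mat{I})}{\Norm(\LoggingParams, \variance_0 \mat{I})}
  = \frac{\norm{\LearnedParams - \LoggingParams}^2}{2 \variance_0} + \frac{\FeatDim}{2}\( \frac{\variance}{\variance_0} - 1 + \ln\frac{\variance_0}{\variance} \) ,
\end{equation}
where the mean term uses $\Sigma_0^{-1} = \variance_0^{-1} \mat{I}$, the $\ln(\variance_0/\variance)$ term uses $\det(\variance_0 \mat{I})/\det(\variance \mat{I}) = (\variance_0/\variance)^{\FeatDim}$, and the $\variance/\variance_0$ term uses $\mathrm{tr}\big( (\variance_0^{-1} \mat{I})(\variance \mat{I}) \big) = \FeatDim\, \variance/\variance_0$.

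Finally, the claimed bound follows by discarding a nonpositive quantity. The hypothesis $0 < \variance \leq \variance_0$ ensures $\variance/\variance_0 - 1 \leq 0$, so $\frac{\FeatDim}{2}\( \variance/\variance_0 - 1 \) \leq 0$; dropping this term only enlarges the right-hand side and leaves exactly \cref{eq:kl_div_bound}.

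There is no genuine obstacle here: the result is essentially bookkeeping once the product decomposition and the Gaussian KL formula are in hand. The one point that merits care is the \emph{direction} of the inequality. It is precisely the assumption $\variance \leq \variance_0$---that the learned posterior is no more diffuse than the prior---that renders the discarded term $\frac{\FeatDim}{2}\( \variance/\variance_0 - 1 \)$ nonpositive. Without this constraint the $\variance/\variance_0$ contribution could be positive and would have to be retained, so I would flag $\variance \leq \variance_0$ as the hypothesis doing the real work.
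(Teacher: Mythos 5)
Your proof is correct and follows essentially the same route as the paper's: discard the identical Gumbel factors, apply the closed-form KL divergence between isotropic Gaussians, and drop the nonpositive term $\frac{\FeatDim}{2}\left(\frac{\variance}{\variance_0} - 1\right)$ using $\variance \leq \variance_0$. Your write-up is merely more explicit about the product decomposition and the provenance of each term in the Gaussian KL formula, which the paper compresses into a single sentence.
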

\begin{proof}
We can ignore the Gumbel distributions, since they are identical. Using the definition of the KL divergence for multivariate Gaussians, and properties of diagonal matrices (since both covariances are diagonal), we have that
\begin{equation}
\KLDiv{\Posterior}{\Prior} = \frac{\norm{\LearnedParams - \LoggingParams}^2}{2 \variance_0} + \frac{\FeatDim}{2} \( \ln\frac{\variance_0}{\variance} + \frac{\variance}{\variance_0} - 1\) .
\label{eq:kl_div_isotropic_gaussians}
\end{equation}
We conclude by noting that $\frac{\variance}{\variance_0} - 1 \leq 0$ for $\variance \leq \variance_0$.
\end{proof}

One implication of \cref{lem:kl_div_bound}, captured by the term $\norm{\LearnedParams - \LoggingParams}^2$, is that, to generalize, the learned policy's parameters should stay close to the logging policy's parameters. This intuition concurs with \citepos{swaminathan:jmlr15} variance regularization, since one way to reduce the variance is to not stray too far from the logging policy.\footnote{Staying close to the logging policy is not the only way to reduce variance. Indeed, a policy could simply avoid the logged actions and achieve zero variance---albeit at the cost of earning zero reward.} Implementing \cref{lem:kl_div_bound}'s guideline in practice requires a simple modification to the usual $L_2$ regularization: instead of $\regparam \norm{\LearnedParams}^2$ (where $\regparam > 0$ controls the amount of regularization), use $\regparam \norm{\LearnedParams - \LoggingParams}^2$. We will henceforth refer to this as \define{logging policy regularization}. For now, we will assume that the logging policy's parameters, $\LoggingParams$, are known; we address the scenario in which the logging policy is unknown in \cref{sec:unknown_logging_policy}.

\begin{remark}
\label{rem:trpo}
The idea of regularizing by the logging policy is reminiscent of \define{trust region policy optimization} (TRPO) \citep{schulman:icml15}, a reinforcement learning algorithm in which each update to the policy's action distribution is constrained to not diverge too much from the current distribution. TRPO can be formulated as a regularizer, $\max_{\context\in\Contexts} \KLDiv{\LoggingPolicy(\context)}{\Policy(\context)}$, with $\context$ denoting the \define{state} in reinforcement learning. Interestingly, when both policies are from the softmax family, with respective parameters $\params_0$ and $\params$, one obtains an upper bound,
\begin{equation}
\max_{\context\in\Contexts} \, \KLDiv{\SoftmaxPolicy_{\params_0}(\context)}{\SoftmaxPolicy_{\params}(\context)}
	\leq 2 \FeatMaxNorm \norm{\params - \params_0} .
\label{eq:trpo_equivalence}
\end{equation}
The inequality follows from Fenchel duality and Cauchy-Schwarz. \cref{eq:trpo_equivalence} looks remarkably similar to \cref{eq:kl_div_bound} in that it involves the distance from the learned parameters, $\params$, to the logging policy's parameters, $\params_0$. Thus, for softmax policies, logging policy regularization is effectively like TRPO---but easier to compute if $\FeatDim \ll \NumActions$.
\end{remark}

Another implication of \cref{lem:kl_div_bound} is that the variance parameters of the prior and posterior---$\variance_0$ and $\variance$, respectively---affect the KL divergence, which can be thought of as the variance of the risk estimator. As we show in \cref{sec:approx_action_probs}, $\variance$ can also affect the bias of the risk estimator. Thus, selecting these parameters controls the bias-variance trade-off. We discuss this trade-off in \cref{sec:mixed_logit_bcrm}.

\subsection{Approximating the Action Probabilities}
\label{sec:approx_action_probs}

In practice, computing the posterior action probabilities (\cref{eq:mixed_logit_policy}) of a mixed logit model is difficult, since there is no analytical expression for the mean of the logistic-normal distribution \citep{aitchison:biometrika80}. It is therefore difficult to log propensities, or to compute the IPS estimator, which is a function of the learned and logged probabilities. Since it is easy to sample from a mixed logit, we can use Monte Carlo methods to estimate the probabilities. Alternatively, we can bound the probabilities by a function of the mean parameters, $\LearnedParams$.

\begin{lemma}
\label{lem:mixed_logit_mean_bounds}
If $\sup_{\context \in \Contexts, \action \in \Actions} \norm{\Feats(\context, \action)} \leq \FeatMaxNorm$, then
\begin{equation}
\SoftmaxPolicy_{\LearnedParams}(\action \| \context) \exp( -\tfrac{1}{2} \variance \FeatMaxNorm^2 )
\,\leq\,
\Policy_{\Posterior}(\action \| \context)
\,\leq\,
\SoftmaxPolicy_{\LearnedParams}(\action \| \context) \exp( 2 \variance \FeatMaxNorm^2 ) .
\label{eq:mixed_logit_mean_bounds}
\end{equation}
\end{lemma}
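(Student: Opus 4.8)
The plan is to reduce the two-sided bound on $\Policy_{\Posterior}(\action \| \context)$ to a statement about the Gaussian moment generating function (MGF) by \emph{factoring out} the partition function evaluated at the mean parameters $\LearnedParams$. Starting from the mixed-logit representation $\Policy_{\Posterior}(\action \| \context) = \Ep_{\params \by \Norm(\LearnedParams, \variance \mat{I})}[ \SoftmaxPolicy_{\params}(\action \| \context) ]$, I would write each Gaussian draw as $\params = \LearnedParams + \mat{v}$ with $\mat{v} \by \Norm(\mat{0}, \variance \mat{I})$, and divide the numerator and denominator of $\SoftmaxPolicy_{\params}(\action \| \context)$ by $\sum_{\action'} \exp(\LearnedParams \cdot \Feats(\context, \action'))$. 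This converts the denominator into a softmax-weighted average and yields the identity
\begin{equation}
\Policy_{\Posterior}(\action \| \context) = \SoftmaxPolicy_{\LearnedParams}(\action \| \context) \, \Ep_{\mat{v} \by \Norm(\mat{0}, \variance \mat{I})} \!\left[ \frac{ \exp( \mat{v} \cdot \Feats(\context, \action) ) }{ \Ep_{\action' \by \SoftmaxPolicy_{\LearnedParams}(\context)} [ \exp( \mat{v} \cdot \Feats(\context, \action') ) ] } \right] .
\end{equation}
The whole problem then reduces to sandwiching the expectation factor, which I will call $R$, between $\exp(-\tfrac{1}{2} \variance \FeatMaxNorm^2)$ and $\exp( 2 \variance \FeatMaxNorm^2 )$.

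For the upper bound I would apply Jensen's inequality to the inner average, using convexity of $\exp$, to obtain $\Ep_{\action'}[ \exp( \mat{v} \cdot \Feats(\context, \action') ) ] \geq \exp( \mat{v} \cdot \bar{\Feats} )$, where $\bar{\Feats} \defeq \Ep_{\action' \by \SoftmaxPolicy_{\LearnedParams}(\context)}[ \Feats(\context, \action') ]$. The integrand is then at most $\exp( \mat{v} \cdot ( \Feats(\context, \action) - \bar{\Feats} ) )$, whose Gaussian expectation is exactly $\exp( \tfrac{1}{2} \variance \norm{ \Feats(\context, \action) - \bar{\Feats} }^2 )$. Since $\bar{\Feats}$ is a convex combination of vectors of norm at most $\FeatMaxNorm$, the triangle inequality gives $\norm{ \Feats(\context, \action) - \bar{\Feats} } \leq 2 \FeatMaxNorm$, which yields the claimed factor $\exp( 2 \variance \FeatMaxNorm^2 )$.

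For the lower bound I would instead take the logarithm of $R$ and apply Jensen twice. First, concavity of $\ln$ moves the outer expectation inside, giving $\ln R \geq \Ep_{\mat{v}}[ \mat{v} \cdot \Feats(\context, \action) ] - \Ep_{\mat{v}}[ \ln \Ep_{\action'}[ \exp( \mat{v} \cdot \Feats(\context, \action') ) ] ]$; the first term vanishes because $\mat{v}$ has zero mean. For the second term, concavity of $\ln$ again lets me pull the expectation over $\mat{v}$ inside the logarithm and swap the order of integration, so that $\Ep_{\mat{v}}[ \ln \Ep_{\action'}[ \exp( \mat{v} \cdot \Feats(\context, \action') ) ] ] \leq \ln \Ep_{\action'} \Ep_{\mat{v}}[ \exp( \mat{v} \cdot \Feats(\context, \action') ) ] = \ln \Ep_{\action'}[ \exp( \tfrac{1}{2} \variance \norm{ \Feats(\context, \action') }^2 ) ] \leq \tfrac{1}{2} \variance \FeatMaxNorm^2$. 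Exponentiating gives $R \geq \exp( -\tfrac{1}{2} \variance \FeatMaxNorm^2 )$.

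The hard part is the lower bound. Unlike the upper bound, it does not follow from a single monotone substitution, and the tempting route of bounding numerator and denominator pointwise by $\exp( \pm \norm{\mat{v}} \FeatMaxNorm )$ via Cauchy--Schwarz is a dead end: it produces the intractable quantity $\Ep_{\mat{v}}[ \exp( 2 \norm{\mat{v}} \FeatMaxNorm ) ]$ instead of a clean Gaussian MGF. The key realization is that one must keep each linear form $\mat{v} \cdot \Feats(\context, \action')$ intact---so that its Gaussian MGF is $\exp( \tfrac{1}{2} \variance \norm{ \Feats(\context, \action') }^2 )$---and exploit the zero mean of $\mat{v}$, which is what forces the double application of Jensen. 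The asymmetry of the two constants (the $\tfrac{1}{2}$ versus the $2$) is then merely an artifact of needing the triangle inequality in the upper bound but only a single-vector norm bound in the lower bound.
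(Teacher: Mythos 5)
Your proposal is correct and follows essentially the same route as the paper's proof: the same factorization of $\Policy_{\Posterior}(\action \| \context)$ around $\SoftmaxPolicy_{\LearnedParams}(\action \| \context)$, the same double application of Jensen's inequality combined with the zero mean of the Gaussian perturbation for the lower bound, and the same Gaussian moment-generating-function computation in both directions. The only (immaterial) difference is in the upper bound, where you bound $\norm{\Feats(\context,\action) - \bar{\Feats}} \leq 2\FeatMaxNorm$ directly, whereas the paper applies Jensen once more to push the expectation over $\action'$ outside the exponential and then bounds $\norm{\Feats(\context,\action) - \Feats(\context,\action')} \leq 2\FeatMaxNorm$; both yield the identical factor $\exp(2\variance\FeatMaxNorm^2)$.
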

\noindent
We defer the proof to \cref{sec:proof_mixed_logit_mean_bounds}.

By \cref{lem:mixed_logit_mean_bounds}, the softmax probabilities induced by the mean parameters provide lower and upper bounds on the probabilities of the mixed logit model. The bounds tighten as the variance, $\variance$, becomes smaller. For instance, if $\variance = \BigO(\DataSize^{-1})$, then $\Policy_{\Posterior}(\action \| \context) \to \SoftmaxPolicy_{\LearnedParams}(\action \| \context)$ as $\DataSize \to \infty$.

During learning, we can use the lower bound of the learned probabilities to upper-bound the IPS estimator. We overload our previous notation to define a new estimator,
\begin{equation}
\ClipEmpRisk(\LearnedParams, \variance, \Data) \defeq 1 - \frac{\exp(-\frac{1}{2} \variance \FeatMaxNorm^2)}{\DataSize} \sum_{i=1}^{\DataSize} \reward_i \, \frac{\SoftmaxPolicy_{\LearnedParams}(\action_i \| \context_i)}{\max\{ \prob_i, \MinPS \}} .
\label{eq:mean_param_emp_risk}
\end{equation}
This estimator is biased, but the bias decreases with $\variance$. Importantly, $\ClipEmpRisk(\LearnedParams, \variance, \Data)$ is easy to compute, since it avoids the logistic-normal integral.\footnote{This statement assumes that the action set is not too large to compute the normalizing constant of the action distribution.}

When the learned posterior is deployed, we can log the upper bound of the propensities, so that future training with the logged data has an upper bound on the IPS estimator.

\subsection{Bayesian CRM for Mixed Logit Models}
\label{sec:mixed_logit_bcrm}

We now present a risk bound for the Bayesian policy, $\Policy_{\Posterior}$, in terms of the softmax policy, $\SoftmaxPolicy_{\LearnedParams}$, given by the mean parameters, $\LearnedParams$. Though stated for fixed $\MinPS$ using \cref{th:pac_bayes_bound}, one can easily derive an analogous bound for data-dependent $\MinPS$ using \cref{th:pac_bayes_bound_allclipping}.
\begin{theorem}
\label{th:mixed_logit_risk_bound}
Let $\Hypotheses$ denote the hypothesis space defined in \cref{eq:mixed_logit_class,eq:sampler}, and let $\Policy_{\Posterior}$ denote the mixed logit policy defined in \cref{eq:mixed_logit_policy}. For any $\DataSize \geq 1$, $\delta \in (0,1)$, $\MinPS \in (0,1)$, $\LoggingParams \in \Reals^{\FeatDim}$ and $\variance_0 \in (0,\infty)$, with probability at least $1 - \delta$ over draws of $\Data \by (\ContextDistribution \times \LoggingPolicy)^{\DataSize}$, the following holds simultaneously for all $\LearnedParams \in \Reals^{\FeatDim}$ and $\variance \in (0,\variance_0]$:
\begin{align}
\Risk(\Policy_{\Posterior})
	\leq \ClipEmpRisk(\LearnedParams, \variance, \Data)
	&+ \sqrt{ \frac{ \big( \ClipEmpRisk(\LearnedParams, \variance, \Data) - 1 + \frac{1}{\MinPS} \big) \big( \KLBound(\LoggingParams, \variance_0, \LearnedParams, \variance) + 2 \ln\frac{\DataSize}{\delta} \big) }{ \MinPS \( \DataSize - 1 \) } } \\
	&+ \frac{ \KLBound(\LoggingParams, \variance_0, \LearnedParams, \variance) + 2 \ln\frac{\DataSize}{\delta} }{ \MinPS \( \DataSize - 1 \) } ,
\label{eq:mixed_logit_risk_bound}
\end{align}
\begin{equation}
\text{where}\quad
\KLBound(\LoggingParams, \variance_0, \LearnedParams, \variance)
	\defeq \frac{\norm{\LearnedParams - \LoggingParams}^2}{\variance_0} + \FeatDim \ln\frac{\variance_0}{\variance} .
\label{eq:mixed_logit_kl_bound}
\end{equation}
\end{theorem}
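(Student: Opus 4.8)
The plan is to start from \cref{th:pac_bayes_bound}, instantiated with the Gaussian--Gumbel prior $\Prior = \Norm(\LoggingParams, \variance_0 \mat{I}) \times \Gumbel(0,1)^{\NumActions}$ and the corresponding posterior $\Posterior = \Norm(\LearnedParams, \variance \mat{I}) \times \Gumbel(0,1)^{\NumActions}$. Because \cref{th:pac_bayes_bound} holds simultaneously for all posteriors on $\Hypotheses$, and each Gaussian--Gumbel posterior is identified with a pair $(\LearnedParams, \variance)$, the high-probability event it provides already holds simultaneously for all such pairs. The remaining work is then entirely deterministic: on this event, I would replace the two data-dependent quantities appearing in \cref{eq:pac_bayes_bound}---the divergence $\KLDiv{\Posterior}{\Prior}$ and the Gibbs empirical risk $\ClipEmpRisk(\Policy_{\Posterior}, \Data)$---by surrogates in the mean parameters $\LearnedParams$ that can only enlarge the right-hand side.

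First I would control the complexity term. By \cref{lem:kl_div_bound}, for every $\variance \in (0, \variance_0]$ we have $\KLDiv{\Posterior}{\Prior} \leq \tfrac{1}{2} \KLBound(\LoggingParams, \variance_0, \LearnedParams, \variance)$, so that $2(\KLDiv{\Posterior}{\Prior} + \ln\tfrac{\DataSize}{\delta}) \leq \KLBound(\LoggingParams, \variance_0, \LearnedParams, \variance) + 2\ln\tfrac{\DataSize}{\delta}$. This is precisely the numerator appearing in the target statement, and it also explains the restriction $\variance \leq \variance_0$, which is exactly the hypothesis of \cref{lem:kl_div_bound}.

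Next I would control the empirical risk. Applying the lower bound of \cref{lem:mixed_logit_mean_bounds}, namely $\Policy_{\Posterior}(\action_i \| \context_i) \geq \SoftmaxPolicy_{\LearnedParams}(\action_i \| \context_i) \exp(-\tfrac{1}{2}\variance\FeatMaxNorm^2)$, term-by-term inside the IPS sum---using that each $\reward_i \geq 0$ and each denominator is positive---yields $\ClipEmpRisk(\Policy_{\Posterior}, \Data) \leq \ClipEmpRisk(\LearnedParams, \variance, \Data)$, with the latter as defined in \cref{eq:mean_param_emp_risk}. This directly upper-bounds the leading term of the bound.

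Finally I would substitute both inequalities into \cref{eq:pac_bayes_bound}. The one point needing care is the middle $\BigO(\DataSize^{-1/2})$ term, which contains the product of the risk factor $\ClipEmpRisk(\Policy_{\Posterior}, \Data) - 1 + \tfrac{1}{\MinPS}$ and the complexity factor. To push the upper bound through the square root I need both factors to be nonnegative, so that replacing each by its larger surrogate increases the product; writing the leading $2$ with the complexity factor then matches the numerator in the claim. Nonnegativity of the complexity factor is immediate, and nonnegativity of the risk factor follows from the observation (noted just after \cref{th:pac_bayes_bound}) that $\ClipEmpRisk$ is bounded below by $1 - \MinPS^{-1}$, so $\ClipEmpRisk(\Policy_{\Posterior}, \Data) - 1 + \tfrac{1}{\MinPS} \geq 0$, and likewise for its surrogate. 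Since every deterministic step holds for all $(\LearnedParams, \variance)$ with $\variance \in (0, \variance_0]$, the simultaneity over parameters is preserved. The main obstacle is therefore not any hard estimate but this bookkeeping---in particular the sign condition inside the square root---which ensures each surrogate is substituted in the correct monotone direction.
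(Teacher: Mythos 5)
Your proposal is correct and takes essentially the same route as the paper's proof: both instantiate \cref{th:pac_bayes_bound} with the Gaussian--Gumbel prior and posterior, apply \cref{lem:kl_div_bound} to replace $2\,\KLDiv{\Posterior}{\Prior}$ by $\KLBound(\LoggingParams, \variance_0, \LearnedParams, \variance)$, and apply \cref{lem:mixed_logit_mean_bounds} to replace $\ClipEmpRisk(\Policy_{\Posterior}, \Data)$ by $\ClipEmpRisk(\LearnedParams, \variance, \Data)$. Your explicit bookkeeping on nonnegativity of the factors inside the square root merely spells out what the paper dismisses as ``easy to show,'' and is the right justification for the monotone substitution.
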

\begin{proof}
Using \cref{lem:mixed_logit_mean_bounds}, it is easy to show that $\ClipEmpRisk(\Policy_{\Posterior}, \Data) \leq \ClipEmpRisk(\LearnedParams, \variance, \Data)$. The rest of the proof follows from using \cref{lem:kl_div_bound} to upper-bound the KL divergence in \cref{th:pac_bayes_bound}.
\end{proof}

\cref{th:mixed_logit_risk_bound} provides an upper bound on the risk that can be computed with training data. Moreover, the bound is differentiable and smooth, meaning it can be optimized using gradient-based methods. This motivates a new regularized learning objective for Bayesian CRM.
\begin{proposition}
\label{prop:mixed_logit_bcrm_objective}
The following optimization minimizes an upper bound on \cref{eq:mixed_logit_risk_bound}:
\begin{equation}
\argmin_{\substack{\LearnedParams \in \Reals^{\FeatDim} \\ \variance \in (0, \variance_0]}} \,
	\ClipEmpRisk(\LearnedParams, \variance, \Data)
	+ \frac{\norm{\LearnedParams - \LoggingParams}^2}{\variance_0 \, \MinPS \( \DataSize - 1 \)}
	- \frac{\FeatDim \ln\variance}{\MinPS \( \DataSize - 1 \)} .
\label{eq:mixed_logit_bcrm_objective}
\end{equation}
\end{proposition}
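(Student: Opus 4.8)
The plan is to treat the right-hand side of \cref{eq:mixed_logit_risk_bound} as the quantity we wish to control, and to exhibit an upper bound on it that equals the objective of \cref{eq:mixed_logit_bcrm_objective} up to a positive multiplicative factor and an additive constant. Since neither operation moves the location of a minimum, the minimizer of \cref{eq:mixed_logit_bcrm_objective} will coincide with the minimizer of a genuine upper bound on the risk bound, which is exactly what the proposition asserts.

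First I would introduce abbreviations for the three ingredients of \cref{eq:mixed_logit_risk_bound}: let $D \defeq \MinPS(\DataSize - 1)$, let $E \defeq \ClipEmpRisk(\LearnedParams, \variance, \Data) - 1 + \tfrac{1}{\MinPS}$, and let $C \defeq \KLBound(\LoggingParams, \variance_0, \LearnedParams, \variance) + 2\ln\tfrac{\DataSize}{\delta}$, so that the middle term of the bound is $\sqrt{EC/D}$ and the final term is $C/D$. The key step is to eliminate the square root via the arithmetic--geometric mean inequality, $\sqrt{EC/D} \le \tfrac12\big(E + \tfrac{C}{D}\big)$. This requires $E \ge 0$ and $C \ge 0$. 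The latter is immediate, since $\KLBound \ge 0$ whenever $\variance \le \variance_0$ and $\ln\tfrac{\DataSize}{\delta} > 0$ for $\DataSize \ge 1$, $\delta \in (0,1)$. For the former I would argue that each summand of $\ClipEmpRisk(\LearnedParams, \variance, \Data)$ in \cref{eq:mean_param_emp_risk} is at most $\MinPS^{-1}$ (rewards and softmax probabilities lie in $[0,1]$, the clipped propensity is at least $\MinPS$, and $\exp(-\tfrac12 \variance \FeatMaxNorm^2) \le 1$), which forces $\ClipEmpRisk(\LearnedParams, \variance, \Data) \ge 1 - \MinPS^{-1}$, i.e.\ $E \ge 0$.

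Substituting the AM--GM bound and then expanding $E$, the right-hand side of \cref{eq:mixed_logit_risk_bound} is at most $\tfrac32 \ClipEmpRisk(\LearnedParams, \variance, \Data) + \tfrac32\,\tfrac{C}{D} + \tfrac12\big(\tfrac{1}{\MinPS} - 1\big)$. I would then expand $\tfrac{C}{D}$ using the definition of $\KLBound$ in \cref{eq:mixed_logit_kl_bound}, writing it as $\tfrac{\norm{\LearnedParams - \LoggingParams}^2}{\variance_0 D} - \tfrac{\FeatDim \ln\variance}{D}$ plus the $(\LearnedParams,\variance)$-independent remainder $\tfrac{\FeatDim \ln\variance_0 + 2\ln(\DataSize/\delta)}{D}$. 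Collecting the two $(\LearnedParams,\variance)$-independent pieces into a single constant, the upper bound becomes $\tfrac32$ times the objective of \cref{eq:mixed_logit_bcrm_objective} plus a constant. Because scaling by the positive factor $\tfrac32$ and adding a constant leave the argmin over $(\LearnedParams, \variance) \in \Reals^{\FeatDim} \times (0, \variance_0]$ unchanged, the conclusion follows.

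I expect the main obstacle to be the justification of the AM--GM step, specifically verifying the nonnegativity $\ClipEmpRisk(\LearnedParams, \variance, \Data) \ge 1 - \MinPS^{-1}$; the remaining algebra is routine bookkeeping. One point worth flagging is that the unweighted inequality (weight $\lambda = 1$ in the general form $\sqrt{EC/D} \le \tfrac12(\lambda E + \lambda^{-1} C/D)$) is precisely the choice that yields a common coefficient on the empirical-risk and complexity terms, so that the resulting upper bound reduces to a scalar multiple of \cref{eq:mixed_logit_bcrm_objective} with its $1{:}1$ balance between $\ClipEmpRisk$ and the regularizer; any other weighting would perturb this ratio.
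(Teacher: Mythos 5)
Your proposal is correct and follows essentially the same route as the paper's proof: both eliminate the square-root term via the geometric--arithmetic mean inequality, obtaining an upper bound equal to $\tfrac{3}{2}$ times the objective of \cref{eq:mixed_logit_bcrm_objective} plus $(\LearnedParams,\variance)$-independent constants, so the argmin is preserved. Your explicit verification that $\ClipEmpRisk(\LearnedParams, \variance, \Data) \geq 1 - \MinPS^{-1}$ (needed for AM--GM to apply) is a welcome detail the paper leaves implicit, but it does not change the substance of the argument.
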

\cref{eq:mixed_logit_bcrm_objective} is unfortunately non-convex. However, we can upper-bound $\ClipEmpRisk(\LearnedParams, \variance, \Data)$ to obtain an objective that is differentiable, smooth and \emph{convex}.
\begin{proposition}
\label{prop:mixed_logit_bcrm_convex_objective}
The following convex optimization minimizes an upper bound on \cref{eq:mixed_logit_risk_bound}:
\begin{equation}
\argmin_{\LearnedParams \in \Reals^{\FeatDim}} \,
	\frac{1}{\DataSize} \sum_{i=1}^{\DataSize} - \frac{\reward_i \ln\SoftmaxPolicy_{\LearnedParams}(\action_i \| \context_i)}{\max\{ \prob_i, \MinPS \}}
	+ \frac{\norm{\LearnedParams - \LoggingParams}^2}{\variance_0 \, \MinPS \( \DataSize - 1 \)} ,
\label{eq:mixed_logit_bcrm_convex_objective}
\end{equation}
\begin{equation}
\text{and}\quad
\variance \defeq \min\left\{ \frac{2 \FeatDim}{\FeatMaxNorm^2 \MinPS (\DataSize - 1)} \( \frac{1}{\DataSize} \sum_{i=1}^{\DataSize} \frac{\reward_i}{\max\{ \prob_i, \MinPS \}} \)^{-1} \!\! , \, \variance_0 \, \right\} .
\label{eq:optimal_variance}
\end{equation}
\end{proposition}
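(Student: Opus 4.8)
\section*{Proof proposal}

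The plan is to derive \cref{eq:mixed_logit_bcrm_convex_objective} as a \emph{convex upper bound} on the objective of \cref{prop:mixed_logit_bcrm_objective}, whose minimizer already minimizes an upper bound on \cref{eq:mixed_logit_risk_bound}; by transitivity it then suffices to exhibit a convex function of $\LearnedParams$ that dominates that objective up to additive terms independent of $\LearnedParams$, and to verify that the stated $\variance$ in \cref{eq:optimal_variance} is the minimizer of the residual $\variance$-dependent term. The objective of \cref{prop:mixed_logit_bcrm_objective} is $\ClipEmpRisk(\LearnedParams, \variance, \Data) + \norm{\LearnedParams - \LoggingParams}^2/(\variance_0 \MinPS (\DataSize - 1)) - \FeatDim \ln\variance/(\MinPS(\DataSize-1))$, and it carries two sources of non-convexity, both hidden inside $\ClipEmpRisk(\LearnedParams,\variance,\Data)$ (\cref{eq:mean_param_emp_risk}): the multiplicative factor $\exp(-\tfrac12\variance\FeatMaxNorm^2)$, which couples $\variance$ to the parameter-dependent IPS sum, and the bare softmax probabilities $\SoftmaxPolicy_{\LearnedParams}$, which enter with a negative sign and whose negation is not convex.

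First I would linearize the factor and decouple the variance. Writing $S \defeq \tfrac{1}{\DataSize}\sum_i \reward_i \SoftmaxPolicy_{\LearnedParams}(\action_i\|\context_i)/\max\{\prob_i,\MinPS\} \geq 0$ and $R \defeq \tfrac{1}{\DataSize}\sum_i \reward_i/\max\{\prob_i,\MinPS\}$ (a constant, since it does not involve $\LearnedParams$), the elementary inequality $e^{-x}\geq 1-x$ together with $S\geq 0$ gives $\ClipEmpRisk(\LearnedParams,\variance,\Data) = 1 - e^{-\frac12\variance\FeatMaxNorm^2}S \leq 1 - S + \tfrac12\variance\FeatMaxNorm^2 S$, and $\SoftmaxPolicy_{\LearnedParams}\leq 1$ lets me replace the $S$ in the error term by the constant $R$, removing the $\variance$--$\LearnedParams$ coupling. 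To convexify the remaining $-S$, I would apply $\ln t \leq t-1$ (equivalently $-t \leq -1-\ln t$) pointwise to each $\SoftmaxPolicy_{\LearnedParams}(\action_i\|\context_i)$, yielding $-S \leq -R - \tfrac{1}{\DataSize}\sum_i \reward_i\ln\SoftmaxPolicy_{\LearnedParams}(\action_i\|\context_i)/\max\{\prob_i,\MinPS\}$; the surviving negative-log-likelihood term is convex in $\LearnedParams$ because $-\ln\SoftmaxPolicy_{\LearnedParams}(\action\|\context)$ is a log-sum-exp minus a linear function of $\LearnedParams$, and the reward/propensity weights are nonnegative.

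Substituting these two inequalities into the objective of \cref{prop:mixed_logit_bcrm_objective} produces an upper bound that separates additively into the convex objective of \cref{eq:mixed_logit_bcrm_convex_objective}, a constant $1-R$, and a purely variance-dependent term $h(\variance) \defeq \tfrac12\variance\FeatMaxNorm^2 R - \FeatDim\ln\variance/(\MinPS(\DataSize-1))$. The subproblem $\min_{\variance\in(0,\variance_0]} h(\variance)$ is one-dimensional and convex (linear minus a logarithm); setting $h'(\variance)=0$ gives the interior stationary point $\variance^\star = 2\FeatDim/(\FeatMaxNorm^2\MinPS(\DataSize-1)R)$, and by convexity the constrained minimizer is $\min\{\variance^\star,\variance_0\}$, which is exactly \cref{eq:optimal_variance}. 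Since $h$ and $1-R$ are independent of $\LearnedParams$, minimizing the convex objective over $\LearnedParams$ and setting $\variance$ via \cref{eq:optimal_variance} jointly minimize this separable upper bound, and convexity of \cref{eq:mixed_logit_bcrm_convex_objective} follows from the log-sum-exp observation above; this establishes both assertions.

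The main obstacle is bookkeeping the direction of each inequality so that the chain genuinely produces an upper bound while simultaneously (a) eliminating the $\variance$--$\LearnedParams$ coupling and (b) leaving a term convex in $\LearnedParams$: the factor $\exp(-\tfrac12\variance\FeatMaxNorm^2)$ must be \emph{lower}-bounded in order to \emph{upper}-bound $\ClipEmpRisk$, the error term's $S$ must be \emph{upper}-bounded by $R$, and the softmax probabilities must be \emph{lower}-bounded by $1+\ln(\cdot)$---three bounds pulling in seemingly opposite directions, reconciled only by the nonnegativity of $S$ and of the reward/propensity weights. A minor edge case worth flagging is $R=0$ (all observed rewards zero), where \cref{eq:optimal_variance} should be read as $\variance=\variance_0$, consistent with $\variance^\star=\infty$.
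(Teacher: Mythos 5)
Your proposal is correct and takes essentially the same approach as the paper: the paper convexifies the empirical risk with the single inequality $u \ln v \leq u v$ applied to $v_i = \SoftmaxPolicy_{\LearnedParams}(\action_i \| \context_i)\exp(-\tfrac{1}{2}\variance\FeatMaxNorm^2)$, which is exactly your three-step chain ($e^{-x} \geq 1-x$, $\SoftmaxPolicy_{\LearnedParams} \leq 1$, $\ln t \leq t-1$) collapsed into one bound, the two differing only by the additive constant $\frac{1}{\DataSize}\sum_{i=1}^{\DataSize} \reward_i/\max\{\prob_i,\MinPS\}$, so the resulting objectives have identical minimizers. Your decoupling of $\LearnedParams$ and $\variance$ and the closed-form constrained solution for $\variance$, clipped at $\variance_0$, match the paper's argument exactly.
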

\noindent
We defer the proofs of \cref{prop:mixed_logit_bcrm_objective,prop:mixed_logit_bcrm_convex_objective} to \cref{sec:proof_mixed_logit_bcrm_objective}.

Conveniently, \cref{eq:mixed_logit_bcrm_convex_objective} is equivalent to a weighted softmax regression with a modified $L_2$ regularizer. This optimization can be solved using standard methods, with guaranteed convergence to a global optimum. Moreover, by decoupling the optimizations of $\LearnedParams$ and $\variance$ in the upper bound (refer to the proof for details), we can solve for the optimal $\variance$ in closed form.

\begin{remark}
\label{rem:log_transform_connections}
The logarithmic transformation of the target policy in \cref{eq:mixed_logit_bcrm_convex_objective} has interesting connections to other work. Those familiar with reinforcement learning may recognize a similarity to \define{policy gradient} methods. By the policy gradient theorem \citep{sutton:nips00}, the gradient of the expected reward is precisely the expected, reward-weighted gradient of the log-likelihood,\footnote{In reinforcement learning, the expectation would be over trajectories, which we omit for simplicity.}
\begin{equation}
\label{eq:policy_gradient}
\grad \Ep_{\action \by \Policy(\context)}[ \Reward(\context, \action) ] = \Ep_{\action \by \Policy(\context)}[ \Reward(\context, \action) \grad \ln \Policy(\action \| \context) ] .
\end{equation}
In online, on-policy training, the expectation is typically approximated by sampling actions from the policy. In offline, off-policy training, the expectation can be approximated by samples from the logging policy, with importance weight $\Policy(\action \| \context) / \LoggingPolicy(\action \| \context)$ to counteract bias. We then obtain a gradient that looks like the gradient of \cref{eq:mixed_logit_bcrm_convex_objective}, albeit weighted by $\Policy(\action \| \context)$ and without the regularization term.

A similar log-transformed objective was independently derived by \citet{ma:aistats19} as a lower bound to the policy improvement objective (i.e., the gain in reward relative to the logging policy). Whereas our bound retains the IPS scaling, theirs isolates the propensities in an additive term, thereby making them irrelevant to optimization.
\end{remark}

In practice, one usually tunes the amount of regularization to optimize the empirical risk on a held-out validation dataset. By \cref{prop:mixed_logit_bcrm_objective,prop:mixed_logit_bcrm_convex_objective}, this is equivalent to tuning the variance of the prior, $\variance_0$. Though $\LoggingParams$ could in theory be any fixed vector, the case when it is the parameters of the logging policy corresponds to an interesting regularizer. This regularizer instructs the learning algorithm to keep the learned policy close to the logging policy, which effectively reduces the variance of the estimator.

Using \cref{th:mixed_logit_risk_bound}, we can examine how the parameters $\variance_0$ and $\variance$ affect the bias-variance trade-off. Recall from \cref{lem:mixed_logit_mean_bounds} that higher values of $\variance$ increase the bias of the estimator, $\ClipEmpRisk(\LearnedParams, \variance, \Data)$. To reduce this bias, we want $\variance$ to be small; e.g., $\variance = \BigTheta(\DataSize^{-1})$ results in negligible bias. However, if we also have $\variance_0 = \BigTheta(1)$, then $\KLBound(\LoggingParams, \variance_0, \LearnedParams, \variance)$---which can be interpreted as the variance of the estimator---has a term, $\FeatDim \ln\frac{\variance_0}{\variance} = \BigO(\FeatDim \ln \DataSize)$, that depends linearly on the number of features, $\FeatDim$. When $\FeatDim$ is large, this term can dominate the risk bound. The dependence on $\FeatDim$ is eliminated when $\variance_0 = \variance$; but if $\variance_0 = \BigTheta(\DataSize^{-1})$, then $\KLBound(\LoggingParams, \variance_0, \LearnedParams, \variance) = \BigO( \norm{\LearnedParams - \LoggingParams}^2 \DataSize )$, which makes the risk bound vacuous.

\section{When the Logging Policy Is Unknown}
\label{sec:unknown_logging_policy}

In \cref{sec:mixed_logit_model}, we assumed that the logging policy was known and used it to construct a prior, which motivated logging policy regularization. However, there may be settings in which the logging policy is unknown.\footnote{Another motivating scenario is when the logging policy is not in the same class as the new policy; e.g., when the features change.} We can nonetheless construct a prior that approximates the logging policy by learning from its logged actions, which motivates a data-dependent variant of logging policy regularization.

At first, this idea may sound counterintuitive. After all, the prior is supposed to be fixed before drawing the training data. However, the expected value of a function of the data is constant with respect to any realization of the data. Therefore, the expected estimator of the logging policy is independent of the training data, and can serve as a valid prior. This type of prior, known as \define{distribution-dependent}, was introduced by \citet{catoni:ims07} and later developed by others \citep{lever:alt10,hernandez:jmlr12,rivasplata:nips18,dziugaite:nips18} to obtain tight PAC-Bayesian bounds. If the estimator of the logging policy concentrates around its mean, then we can probabilistically bound the distance between the learned logging policy and the distribution-dependent prior. We can then relate the posterior to the learned logging policy via the triangle inequality.

Overloading our previous notation, let $\Loss : \Reals^{\FeatDim} \times \Contexts \times \Actions \to \Reals_+$ denote a loss function that measures the fit of parameters $\params \in \Reals^{\FeatDim}$, given context $\context \in \Contexts$ and action $\action \in \Actions$. We will assume that $\Loss$ is both convex and $\lipschitz$-Lipschitz with respect to $\params$. This assumption is satisfied by, e.g., the negative log-likelihood. For a dataset, $\Data \by (\ContextDistribution \times \LoggingPolicy)^{\DataSize}$, containing logged contexts and actions, let
\begin{equation}
\RERMObjective(\params, \Data) \defeq \frac{1}{\DataSize} \sum_{i=1}^{\DataSize} \Loss(\params, \context_i, \action_i) + \regparam \norm{\params}^2
\label{eq:rerm_objective}
\end{equation}
denote a regularized objective; let
\begin{equation}
\RERM \defeq \argmin_{\params \in \Reals^{\FeatDim}} \, \RERMObjective(\params, \Data)
\label{eq:rerm}
\end{equation}
denote its minimizer; and let
\begin{equation}
\MeanRERM \defeq \Ep_{\Data \by (\ContextDistribution \times \LoggingPolicy)^{\DataSize}}[ \RERM[\Data] ]
\label{eq:expected_rerm}
\end{equation}
denote the expected minimizer. Since $\MeanRERM$ is a constant, it is independent of any realization of $\Data$. We can therefore construct a Gaussian prior around $\MeanRERM$, which makes the KL divergence proportional to $\norm{\LearnedParams - \MeanRERM}^2$.

Importantly, $\RERMObjective$ is strongly convex. An implication of this property is that its minimizer exhibits \define{uniform algorithmic stability}; meaning, it is robust to perturbations of the training data. Using stability, one can show that the random variable $\RERM$ concentrates around its mean, $\MeanRERM$ \citep{liu:icml17}. Thus, with high probability, $\norm{\RERM - \MeanRERM}$ is small---which, via the triangle inequality, implies that $\norm{\LearnedParams - \MeanRERM}$ is approximately $\norm{\LearnedParams - \RERM}$.

We use this reasoning to prove the following (deferred to \cref{sec:proof_data_dep_reg_risk_bound}).
\begin{theorem}
\label{th:data_dep_reg_risk_bound}
Let $\Hypotheses$ denote the hypothesis space defined in \cref{eq:mixed_logit_class,eq:sampler}, and let $\Policy_{\Posterior}$ denote the mixed logit policy defined in \cref{eq:mixed_logit_policy}. Let $\RERM$ denote the minimizer defined in \cref{eq:rerm}, for a convex, $\lipschitz$-Lipschitz loss function. For any $\DataSize \geq 1$, $\delta \in (0,1)$, $\MinPS \in (0,1)$ and $\variance_0 \in (0,\infty)$, with probability at least $1 - \delta$ over draws of $\Data \by (\ContextDistribution \times \LoggingPolicy)^{\DataSize}$, the following holds simultaneously for all $\LearnedParams \in \Reals^{\FeatDim}$ and $\variance \in (0,\variance_0]$:
\begin{align}
\Risk(\Policy_{\Posterior})
	\leq \ClipEmpRisk(\LearnedParams, \variance, \Data)
	&+ \sqrt{ \frac{ \big( \ClipEmpRisk(\LearnedParams, \variance, \Data) - 1 + \frac{1}{\MinPS} \big) \big( \hat{\KLBound}(\RERM, \variance_0, \LearnedParams, \variance) + 2 \ln\frac{2\DataSize}{\delta} \big) }{ \MinPS \( \DataSize - 1 \) } } \\
	&+ \frac{ \hat{\KLBound}(\RERM, \variance_0, \LearnedParams, \variance) + 2 \ln\frac{2\DataSize}{\delta} }{ \MinPS \( \DataSize - 1 \) } ,
\label{eq:data_dep_reg_risk_bound}
\end{align}
\begin{equation}
\text{where}\quad
\hat{\KLBound}(\RERM, \variance_0, \LearnedParams, \variance)
	\defeq \frac{ \Big( \norm{\LearnedParams - \RERM} + \frac{\lipschitz}{\regparam} \sqrt{ \frac{2 \ln\frac{4}{\delta}}{\DataSize} } \Big)^2 }{\variance_0}
		+ \FeatDim \ln\frac{\variance_0}{\variance} .
\label{eq:data_dep_reg_kl_bound}
\end{equation}
\end{theorem}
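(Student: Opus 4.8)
The plan is to instantiate \cref{th:mixed_logit_risk_bound} with a \emph{distribution-dependent} prior centered at the expected minimizer $\MeanRERM$, and then to convert the resulting (uncomputable) dependence on $\MeanRERM$ into the computable quantity $\norm{\LearnedParams - \RERM}$ by means of a concentration argument for the minimizer $\RERM$.

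First I would observe that, although $\RERM$ depends on $\Data$, its expectation $\MeanRERM$ is a fixed vector in $\Reals^{\FeatDim}$ that does not depend on any realization of $\Data$; hence the prior $\Prior \defeq \Norm(\MeanRERM, \variance_0 \mat{I}) \times \Gumbel(0,1)^{\NumActions}$ is a legitimate PAC-Bayesian prior. Since \cref{th:mixed_logit_risk_bound} is stated for an arbitrary \emph{fixed} $\LoggingParams \in \Reals^{\FeatDim}$, I would apply it verbatim with $\LoggingParams \defeq \MeanRERM$, but at confidence $1 - \delta/2$ rather than $1 - \delta$. This reserves half of the failure probability for the concentration step and turns $\ln\frac{\DataSize}{\delta}$ into $\ln\frac{2\DataSize}{\delta}$, matching the logarithmic term in the claim. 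On this event, which holds with probability at least $1-\delta/2$, the KL contribution is $\KLBound(\MeanRERM, \variance_0, \LearnedParams, \variance) = \frac{\norm{\LearnedParams - \MeanRERM}^2}{\variance_0} + \FeatDim \ln\frac{\variance_0}{\variance}$.

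Next I would control $\norm{\RERM - \MeanRERM}$. The regularizer $\regparam \norm{\params}^2$ makes $\RERMObjective$ strongly convex, and together with the $\lipschitz$-Lipschitz assumption on $\Loss$ this yields uniform algorithmic stability of the minimizer: replacing a single logged example perturbs $\RERM$ by $\BigO(\lipschitz / (\regparam \DataSize))$ in norm. Feeding this bounded-differences constant into a McDiarmid-type concentration inequality for the vector-valued map $\Data \mapsto \RERM$---precisely the argument of \citet{liu:icml17}---shows that, with probability at least $1-\delta/2$, $\norm{\RERM - \MeanRERM} \leq \frac{\lipschitz}{\regparam} \sqrt{\frac{2 \ln\frac{4}{\delta}}{\DataSize}}$. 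A union bound over the two events gives joint probability at least $1-\delta$.

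Finally, on the intersection of the two events I would apply the triangle inequality $\norm{\LearnedParams - \MeanRERM} \leq \norm{\LearnedParams - \RERM} + \norm{\RERM - \MeanRERM}$ and substitute the concentration bound for the second term. Squaring and inserting the result into $\KLBound(\MeanRERM, \variance_0, \LearnedParams, \variance)$ reproduces exactly $\hat{\KLBound}(\RERM, \variance_0, \LearnedParams, \variance)$ as defined in \cref{eq:data_dep_reg_kl_bound}, which yields the stated bound. I expect the main obstacle to be the concentration step: one must rigorously derive the argument-stability constant from strong convexity and Lipschitzness, and then obtain a high-probability deviation for a vector-valued statistic with the precise constant $\frac{\lipschitz}{\regparam}\sqrt{2\ln(4/\delta)/\DataSize}$, including control of $\Ep\norm{\RERM - \MeanRERM}$; this is where the cited stability result does the heavy lifting.
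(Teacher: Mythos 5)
Your proposal is correct and follows essentially the same route as the paper's proof: instantiate \cref{th:mixed_logit_risk_bound} with $\LoggingParams \defeq \MeanRERM$ at confidence $1 - \delta/2$, derive uniform stability of $\RERM$ (constant $\lipschitz/(\regparam \DataSize)$) from strong convexity and Lipschitzness, invoke the concentration result of \citet{liu:icml17} at confidence $1 - \delta/2$ to obtain $\norm{\RERM - \MeanRERM} \leq \frac{\lipschitz}{\regparam}\sqrt{2\ln(4/\delta)/\DataSize}$, and conclude via the triangle inequality and a union bound. The only cosmetic difference is that the paper explicitly re-proves the stability constant as a standalone lemma rather than treating it as part of the cited result, but the substance is identical.
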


It is straightforward to show that \cref{prop:mixed_logit_bcrm_objective,prop:mixed_logit_bcrm_convex_objective} hold for \cref{th:data_dep_reg_risk_bound} with $\LoggingParams \defeq \RERM$. Thus, \cref{th:data_dep_reg_risk_bound} motivates the following two-step learning procedure for Bayesian CRM:
\begin{enumerate}
\itemsep 2pt
\item Using logged data, $\Data$, but ignoring rewards, solve \cref{eq:rerm} to estimate softmax parameters, $\RERM$, that approximate the logging policy.
\item Using $\Data$ again, including the rewards, solve \cref{eq:mixed_logit_bcrm_objective} or \ref{eq:mixed_logit_bcrm_convex_objective}, with $\LoggingParams \defeq \RERM$, to train a new mixed logit policy.
\end{enumerate}

\begin{remark}
\label{rem:missing_propensities}
Throughout, we have assumed that the logged data includes the propensities, which enable IPS weighting. Given that we can learn to approximate the logging policy, it seems natural to use the learned propensities in the absence of the true propensities. In practice, this approximation may work, though we cannot provide any formal guarantees for it without making assumptions about the true logging policy. We leave this as a task for future work.
\end{remark}

\section{Experiments}
\label{sec:experiments}

Our Bayesian analysis of CRM suggests a new regularization technique, logging policy regularization (LPR). Using the logging policy to construct a prior, we regularize by the squared distance between the (learned) logging policy's softmax parameters, $\LoggingParams$, and the posterior mean, $\LearnedParams$, over softmax parameters. In this section, we empirically verify the following claims:
\begin{enumerate}
\itemsep 2pt
\item LPR outperforms standard $L_2$ regularization whenever the logging policy outperforms a uniform action distribution.
\item LPR is competitive with variance regularization (i.e., POEM \citep{swaminathan:jmlr15}), and is also faster to optimize.
\item When the logging policy is unknown, we can estimate it from logged data, then use the estimator in LPR with little deterioration in performance.
\end{enumerate}

We will use the class of mixed logit models from \cref{sec:mixed_logit_model}. For simplicity, we choose to only optimize the posterior mean, $\LearnedParams$, assuming that the posterior variance, $\variance$, is fixed to some small value, e.g., $\DataSize^{-1}$. This is inconsequential, since we will approximate the posterior action probabilities, $\Policy_{\Posterior}(\action \| \context)$, with a softmax of the mean parameters, $\SoftmaxPolicy_{\LearnedParams}(\action \| \context)$. By \cref{lem:mixed_logit_mean_bounds}, with small $\variance$, this is a reasonable approximation. In a small departure from our analysis, we add an unregularized bias term for each action.

We evaluate two methods based on LPR. The first method, inspired by \cref{prop:mixed_logit_bcrm_objective}, combines LPR with the truncated IPS estimator:
\begin{equation}
\argmin_{\LearnedParams \in \Reals^{\FeatDim}} \,
	\frac{1}{\DataSize} \sum_{i=1}^{\DataSize} - \frac{\reward_i \, \SoftmaxPolicy_{\LearnedParams}(\action_i \| \context_i)}{\max\{ \prob_i, \MinPS \}}
	+ \regparam \norm{\LearnedParams - \LoggingParams}^2 ,
\label{eq:ips_lpr}
\end{equation}
where $\MinPS \in (0,1)$ and $\regparam \geq 0$ are free parameters. (Note that we have omitted the constant one from the empirical risk, since it is irrelevant to the optimization.) We call this method IPS-LPR. The second method, inspired by \cref{prop:mixed_logit_bcrm_objective}, is a convex upper bound:
\begin{equation}
\argmin_{\LearnedParams \in \Reals^{\FeatDim}} \,
	\frac{1}{\DataSize} \sum_{i=1}^{\DataSize} - \frac{\reward_i \ln\SoftmaxPolicy_{\LearnedParams}(\action_i \| \context_i)}{\max\{ \prob_i, \MinPS \}}
	+ \regparam \norm{\LearnedParams - \LoggingParams}^2 .
\label{eq:wnll_lpr}
\end{equation}
Since the first term is essentially a weighted negative log-likelihood, we call this method WNLL-LPR. 

We compare the above methods to several baselines. The first baseline is IPS with standard $L_2$ regularization,
\begin{equation}
\argmin_{\LearnedParams \in \Reals^{\FeatDim}} \,
	\frac{1}{\DataSize} \sum_{i=1}^{\DataSize} - \frac{\reward_i \, \SoftmaxPolicy_{\LearnedParams}(\action_i \| \context_i)}{\max\{ \prob_i, \MinPS \}}
	+ \regparam \norm{\LearnedParams}^2 ,
\label{eq:ips_l2}
\end{equation}
which we refer to as IPS-L2. The second baseline is POEM \citep{swaminathan:jmlr15}, which solves a variance regularized objective,
\begin{equation}
\argmin_{\LearnedParams \in \Reals^{\FeatDim}} \,
	\frac{1}{\DataSize} \sum_{i=1}^{\DataSize} - \reward_i \min\left\{ \frac{\SoftmaxPolicy_{\LearnedParams}(\action_i \| \context_i)}{\prob_i}, \MinPS^{-1}  \right\}
	+ \regparam \sqrt{\frac{\SampleVar_{\MinPS}(\SoftmaxPolicy_{\LearnedParams}, \Data)}{\DataSize}} ,
\label{eq:poem}
\end{equation}
using a majorization-minimization algorithm. We also test a variant of POEM that adds $L_2$ regularization, which we refer to as POEM-L2.

All methods require some form of IPS truncation. For IPS-L2, IPS-LPR and WNLL-LPR, we use $\max\{ \prob_i, \MinPS \}^{-1}$; for POEM and POEM-L2, we use $\min\{ \Policy(\action_i \| \context_i) / \prob_i, \MinPS^{-1} \}$, per \citeauthor{swaminathan:jmlr15}'s original formulation. In all experiments, we set $\MinPS \defeq 0.01$, which concurs with \citepos{ionides:jcgs08} recommendation of $\BigO(\DataSize^{-1})$.

Since all methods support stochastic first-order optimization, we use AdaGrad \citep{duchi:jmlr11} with mini-batches of 100 examples. We set the learning rate to 0.1 and the smoothing parameter to one, which we find necessary to avoid numerical instability due to small gradients in early rounds of training. Unless otherwise stated, we run training for 500 epochs, with random shuffling of the training data at each epoch. All model parameters are initialized to zero, and all runs of training are seeded such that every method receives the same sequence of training examples.

We report results on two benchmark image classification datasets: Fashion-MNIST \citep{xiao:corr17} and CIFAR-100 \citep{krizhevsky:tech09}. Fashion-MNIST consists of 70,000 (60,000 training; 10,000 testing) grayscale images from 10 categories of apparel and accessories. We extract features from each image by normalizing pixel intensities to $[0,1]$ and flattening the $(28 \times 28)$-pixel grid to a 784-dimensional vector. CIFAR-100 consists of 60,000 (50,000 training; 10,000 testing) color images from 100 general object categories. As this data is typically modeled with deep convolutional neural networks, we use transfer learning to extract features expressive enough to yield decent performance with the class of log-linear models described in \cref{sec:mixed_logit_model}.\footnote{In practice, there is no reason why one could not simply learn the representation and policy jointly---e.g., using a convolutional neural network with softmax output---but we chose to keep our experiments as close as possible to what is supported by our theoretical analysis.} Specifically, we use the last hidden layer of a pre-trained ResNet-50 network \citep{he:cvpr16}, which was trained on ImageNet \citep{deng:cvpr09}, to output 2048-dimensional features for CIFAR-100.

Following prior work \citep{beygelzimer:kdd09,swaminathan:jmlr15,joachims:iclr18}, we use a standard supervised-to-bandit conversion to simulate logged bandit feedback. We start by randomly sampling 1,000 training examples (without replacement) to train a softmax logging policy using supervised learning. We then use the logging policy to sample a label (i.e., action) for each remaining training example. The reward is one if the sampled label matches the true label, and zero otherwise. We repeat this procedure 10 times, using 10 random splits of the training data, thereby generating 10 datasets of logged contexts, actions, propensities and rewards.

We compare methods along two metrics. Our primary metric is the expected reward under the stochastic policy, $\Ep_{\action \by \Policy(\context)}[ \Reward(\context, \action) ]$, averaged over the testing data. Our secondary metric---which is not directly supported by our analysis, but is nonetheless of interest---is the reward of the deterministic \define{argmax policy}, $\Reward(\context, \argmax_{\action\in\Actions} \Policy(\action \| \context))$. Since the reward is one for the true label and zero otherwise, the first metric is simply the policy's probability of sampling the correct label, and the second metric is the accuracy of the argmax policy.

\subsection{Logging Policy as Prior}
\label{sec:logging_policy_prior}

Our first experiment investigates our claim that the logging policy is a better prior than a standard normal distribution, thus motivating LPR over $L_2$ regularization. For each simulated log dataset, we train new policies using IPS-L2 and IPS-LPR, with regularization parameter values $\regparam = 10^{-6}, 10^{-5}, \dots, 1$. \cref{fig:l2_vs_lpr-reg_param_sweep} plots the expected test reward as a function of $\regparam$. The dotted, black line indicates the performance of the logging policy. We find that IPS-LPR outperforms IPS-L2 for each value of $\regparam$; meaning, for any amount of regularization, IPS-LPR is always better. Further, while the performance of IPS-L2 degrades to that of a uniform action distribution as we over-regularize, the performance of IPS-LPR converges to that of the logging policy. This illustrates the natural intuition that a policy that does something smarter than random guessing is an informative prior.
 
An implication of this statement is that, as the logging policy's action distribution becomes more uniform, its efficacy as a prior should diminish. To verify this, we construct a sequence of logging policies that interpolate between the above logging policy and the uniform distribution. We do so by multiplying the weights by an inverse-temperature parameter, $\invtemp = 0, 0.2, \dots, 1$. We then generate log datasets for each logging policy, and train new policies using IPS-L2 and IPS-LPR, with $\regparam \defeq 0.001$. \cref{fig:l2_vs_lpr-invtemp_sweep} plots the resulting test reward as a function of $\invtemp$. As expected, the performance of IPS-LPR gradually converges to that of IPS-L2 as the logging policy converges to the uniform distribution.

One could also ask what happens when when the logging policy is \emph{worse} than a uniform distribution. Indeed, though not shown here, we find that IPS-LPR performs worse than IPS-L2 in that scenario. However, one could reasonably argue that such a scenario is unlikely to occur in practice, since there is no point to deploying a logging policy that performs worse than a uniform distribution. If the uniform distribution achieves higher reward, then it makes more sense to deploy it and thereby collect unbiased data. In the setting where we have data to train the logging policy, it is straightforward to estimate its expected reward and compare it to that of the uniform distribution.

\begin{figure}[t]
\centering
\subfloat[][Varying the amount of regularization.]{
	\includegraphics[width=0.235\textwidth]{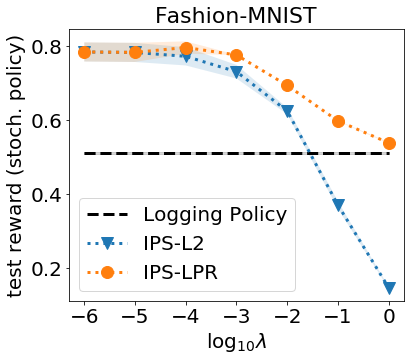}
	\includegraphics[width=0.235\textwidth]{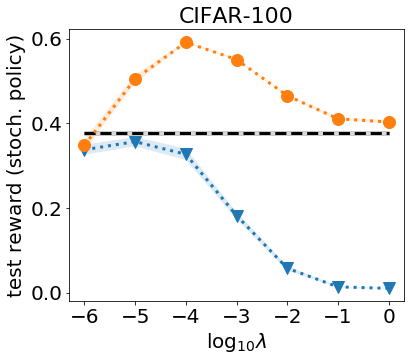}
	\label{fig:l2_vs_lpr-reg_param_sweep}
	}
\subfloat[][Varying the quality of the logging policy.]{
	\includegraphics[width=0.25\textwidth]{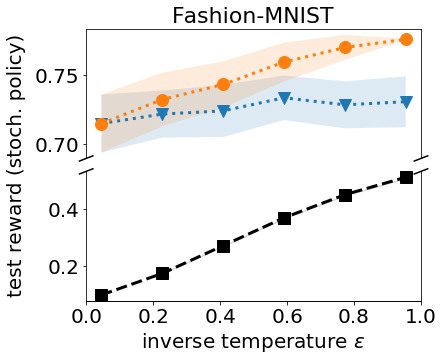}
	\includegraphics[width=0.235\textwidth]{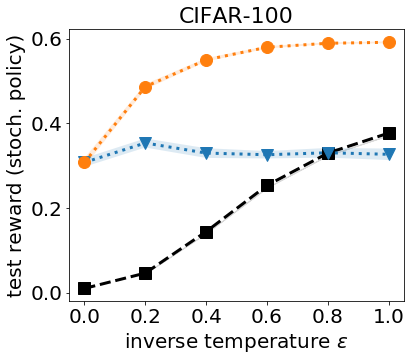}
	\label{fig:l2_vs_lpr-invtemp_sweep}
	}
\caption{$L_2$ regularization vs.\ logging policy regularization (LPR). Each line is the average of 10 trials, with shading to indicate the 95\% confidence interval. \cref{fig:l2_vs_lpr-reg_param_sweep} plots the expected test reward as a function of the regularization parameter, $\regparam$. \cref{fig:l2_vs_lpr-invtemp_sweep} analyzes a spectrum of logging policies from the uniform action distribution ($\invtemp = 0$) to the trained distribution ($\invtemp = 1$).}
\label{fig:l2_vs_lpr}
\end{figure}

\subsection{Comparison to POEM}
\label{sec:comparison_to_poem}

As discussed earlier, LPR relates to variance regularization in that one way to minimize variance is to keep the new policy close to the logging policy. We are therefore prompted to investigate how LPR compares to variance regularization (i.e., POEM) in practice. In this experiment, our goal is to achieve the highest expected reward for each method on each log dataset, without looking at the testing data. Accordingly, we tune the regularization parameter, $\regparam$, using 5-fold cross-validation on each log dataset, with truncated IPS estimation of expected reward on the holdout set. For simplicity, we use grid search over powers of ten; $\regparam = 10^{-8}, \dots, 10^{-3}$ for LPR and $\regparam = 10^{-3}, \dots, 10^{2}$ for variance regularization. For POEM-L2, we tune the $L_2$ regularization parameter (in the same range as LPR) by fixing the variance regularization parameter to its optimal value. During parameter tuning, we limit training to 100 epochs. Once the parameter values have been selected, we train a new policy on the entire log dataset for 500 (Fashion-MNIST) or 1000 (CIFAR-100) epochs and evaluate it on the testing data.

\cref{tab:all_results} reports the results of this experiment. For completeness, we include results for all proposed methods and baselines, including the logging policy. On Fashion-MNIST, the variance regularization baselines (POEM and POEM-L2) achieve the highest expected reward, but the LPR methods (IPS-LPR and WNLL-LPR) are competitive. Indeed, the differences between these methods are not statistically significant according to a paired $t$-test with significance threshold 0.05. Meanwhile, all four significantly outperform IPS-L2 and the logging policy. Interestingly, WNLL-LPR performs best in terms of the argmax policy, perhaps owing to the fact that it is optimizing what is essentially a classification loss. Indeed, in classification problems with bandit feedback and binary rewards, the first term in \cref{eq:wnll_lpr} is an unbiased estimator of the expected negative log-likelihood, which is a surrogate for the expected misclassification rate of the argmax policy.

The CIFAR-100 data presents a more challenging learning problem than Fashion-MNIST, since it has a much larger action set, and several times as many features. It is perhaps due to these difficulties that the baselines are unable to match the performance of the logging policy---which, despite being trained on far less data, is trained with full supervision. Meanwhile, both LPR methods outperform the logging policy by wide margins. We believe this is due to the fact that LPR is designed with incremental training in mind. The new policy is encouraged to stay close to the logging policy not just to hedge against overfitting, but also because the logging policy is assumed to be a good starting point.

\begin{table}[t]
\caption{Test set rewards for Fashion-MNIST and CIFAR-100, averaged over 10 trials, with 5-fold cross-validation of regularization parameters at each trial.}
\centering
\begin{tabular}{lcccc}
\toprule
  			& \multicolumn{2}{c}{\textbf{Fashion-MNIST}}	& \multicolumn{2}{c}{\textbf{CIFAR-100}} \\
\textbf{Method} &  \textbf{stoch.} &  \textbf{argmax} 		      	&  \textbf{stoch.} &  \textbf{argmax} \\
\midrule
Logging Policy	&      0.5123 &    0.7099					&	0.3770 &    0.4797 \\
\midrule
IPS-L2         	&      0.7778 &    0.7890					&      0.3475 &    0.3624 \\
POEM		&      0.8060 &    0.8124					& 	0.3338 &    0.3392 \\
POEM-L2		&	0.8050 &    0.8126					&	0.3486 &    0.3641 \\
\midrule
IPS-LPR        	&	0.7955 &    0.8154					&	0.5553 &    0.6134 \\
WNLL-LPR       &      0.7978 &    0.8305					&	0.6143 &    0.6272 \\
\midrule
IPS-LLPR		&      0.7950 &    0.8153					& 	0.5455  & 	  0.6077 \\
WNLL-LLPR     &      0.7978 &    0.8305					& 	0.6143  & 	  0.6272 \\
\bottomrule
\end{tabular}
\label{tab:all_results}
\end{table}

It is worth comparing the running times of POEM and LPR. Recall that POEM is a majorization-minimization algorithm designed to enable stochastic optimization of a variance-regularized objective. At each epoch of training, POEM constructs an upper bound to the objective by processing all examples in the training data. This additional computation effectively doubles POEM's time complexity relative to the LPR methods, which only require one pass over the data per epoch. In the Fashion-MNIST experiments, we find that POEM is on average $25\%$ slower than IPS-LPR.

\subsection{Learning the Logging Policy}
\label{sec:learn_logging_policy}

Per \cref{sec:unknown_logging_policy}, when the logging policy is unknown, we can estimate its softmax parameters, $\LoggingParams$, then use the estimate, $\RERM$, in LPR. We now verify this claim empirically on Fashion-MNIST. Using the log datasets from the previous sections, we learn the logging policy with the regularized negative log-likelihood:
\begin{equation}
\RERM \defeq \argmin_{\params \in \Reals^{\FeatDim}} \, \frac{1}{\DataSize} \sum_{i=1}^{\DataSize} -\ln \SoftmaxPolicy_{\params}(\action_i \| \context_i) + \regparam \norm{\params}^2 .
\label{eq:learn_log_policy_objective}
\end{equation}
We optimize this objective using 100 epochs of AdaGrad, with the same settings as the other experiments. We set the regularization parameter aggressively high, $\regparam \defeq 0.01$, to ensure that the learned distribution does not become too peaked. Given $\RERM$ for each log dataset, we then train new policies using IPS-LPR and WNLL-LPR, with the same $\regparam$ values tuned in \cref{sec:comparison_to_poem}. The results of this experiment are given in the bottom section of \cref{tab:all_results}, as methods IPS-LLPR and WNLL-LLPR (for \emph{learned} LPR). The rewards are nearly identical to those when the logging policy is known, thus demonstrating that LPR does not require the actual logging policy in order to be effective.

\section{Conclusion}
\label{sec:conclusion}

We have presented a PAC-Bayesian analysis of counterfactual risk minimization, for learning Bayesian policies from logged bandit feedback. Like \citepos{swaminathan:jmlr15} risk bound (\cref{eq:crm_bound}), ours achieves a ``fast" learning rate under certain conditions---though theirs suggests variance regularization, while ours suggests regularizing by the posterior's divergence from the prior. We applied our risk bound to a class of mixed logit policies, from which we derived two Bayesian CRM objectives based on logging policy regularization. We also derived a two-step learning procedure for estimating the regularizer when the logging policy is unknown. Our empirical study indicated that logging policy regularization can achieve significant improvements over existing methods.

\section*{Acknowledgements}
\label{sec:acks}

We thank Thorsten Joachims for thoughtful discussions and helpful feedback.


\appendix

\section{Deferred Proofs}
\label{sec:deferred_proofs}

This appendix contains all deferred proofs.

\subsection{Proof of \cref{th:pac_bayes_bound_allclipping}}
\label{sec:proof_pac_bayes_bound_allclipping}

We construct an infinite sequence of $\MinPS$ values, $(\MinPS_i \defeq 2^{-i})_{i=1}^{\infty}$, and $\delta$ values, $(\delta_i \defeq \delta \MinPS_i)_{i=1}^{\infty}$. For any $\MinPS_i$, \cref{eq:pac_bayes_bound} holds with probability at least $1 - \delta_i$. Thus, with probability at least $1 - \sum_{i=0}^\infty \delta_i = 1 - \delta$, \cref{eq:pac_bayes_bound} holds for all $\MinPS_i$ simultaneously.

For a given $\MinPS$---which may depend on the data---we select $i^\star \defeq \ceil{ \frac{\ln \MinPS^{-1}}{\ln 2} }$. (Since $\MinPS \in (0,1)$, the ceiling function ensures that $i^\star \geq 1$.) Then, we have that $\MinPS / 2 \leq \MinPS_{i^\star} \leq \MinPS$; and, since $\max\{ \prob, \MinPS_{i^\star} \} \leq \max\{ \prob, \MinPS \}$, we have that $\ClipEmpRisk[\MinPS_{i^\star}](\Policy, \Data) \leq \ClipEmpRisk(\Policy, \Data)$. Further, $\delta_{i^\star} \geq \delta \MinPS / 2$. Thus, with probability at least $1 - \delta$,
\begin{align}
\Risk(\Policy)
	\leq \ClipEmpRisk[\MinPS_{i^\star}](\Policy, \Data)
		&+ \sqrt{ \frac{ 2 \big( \ClipEmpRisk[\MinPS_{i^\star}](\Policy, \Data) - 1 + \frac{1}{\MinPS_{i^\star}} \big) \big( \KLDiv{\Posterior}{\Prior} + \ln\frac{\DataSize}{\delta_{i^\star}} \big) }{ \MinPS_{i^\star} \( \DataSize - 1 \) } } \\
		&+ \frac{ 2 \( \KLDiv{\Posterior}{\Prior} + \ln\frac{\DataSize}{\delta_{i^\star}} \) }{ \MinPS_{i^\star} \( \DataSize - 1 \) } \\
	\leq \ClipEmpRisk(\Policy, \Data)
		&+ \sqrt{ \frac{ 4 \big( \ClipEmpRisk(\Policy, \Data) - 1 + \frac{2}{\MinPS} \big) \( \KLDiv{\Posterior}{\Prior} + \ln\frac{2 \DataSize}{\delta \MinPS} \) }{ \MinPS \( \DataSize - 1 \) } } \\
		&+ \frac{ 4 \( \KLDiv{\Posterior}{\Prior} + \ln\frac{2 \DataSize}{\delta \MinPS} \) }{ \MinPS \( \DataSize - 1 \) } ,
\label{eq:pac_bayes_bound_i_star}
\end{align}
which completes the proof.

\subsection{Proof of \cref{lem:mixed_logit_mean_bounds}}
\label{sec:proof_mixed_logit_mean_bounds}

We begin with the lower bound. First, let 
\begin{equation}
\Partition(\params) \defeq \sum_{\action' \in \Actions} \exp( \params \cdot \Feats(\context, \action') )
\label{eq:partition}
\end{equation}
denote a normalizing constant, sometimes referred to as the \define{partition function}. (Since $\context$ is given, our notation ignores the fact that $\Partition$ is a function of $\context$.) Using $\Partition$ in the definition of $\SoftmaxPolicy$, and applying Jensen's inequality, we have that
\begin{align}
\Policy_{\Posterior}(\action \| \context)
	&= \Ep_{\params \by \Norm(\LearnedParams, \variance \mat{I})} \left[ \SoftmaxPolicy_{\params}(\action \| \context) \right] \\
	&= \Ep_{\params \by \Norm(\LearnedParams, \variance \mat{I})} \left[ \exp\( \params \cdot \Feats(\context, \action) - \ln\Partition(\params) \) \right] \\
	&\geq \exp\( \Ep_{\params \by \Norm(\LearnedParams, \variance \mat{I})} \left[ \params \cdot \Feats(\context, \action) - \ln\Partition(\params) \right] \) .
\label{eq:post_action_prob_lb_1}
\end{align}
We then express the random parameters, $\params \by \Norm(\LearnedParams, \variance \mat{I})$, as the sum of the mean parameters, $\LearnedParams$, and a zero-mean Gaussian vector, $\gaussians \by \Norm(0, \variance \mat{I})$, which yields
\begin{align}
\Ep_{\params \by \Norm(\LearnedParams, \variance \mat{I})} \left[ \params \cdot \Feats(\context, \action) - \ln\Partition(\params) \right]
	&= \Ep_{\gaussians \by \Norm(0, \variance \mat{I})} \left[ (\LearnedParams + \gaussians) \cdot \Feats(\context, \action) - \ln\Partition(\LearnedParams + \gaussians) \right] \\
	&= \LearnedParams \cdot \Feats(\context, \action) \, - \Ep_{\gaussians \by \Norm(0, \variance \mat{I})} \left[ \ln\Partition(\LearnedParams + \gaussians) \right] \\
	&= \LearnedParams \cdot \Feats(\context, \action) - \ln\Partition(\LearnedParams) \\
	&\quad - \Ep_{\gaussians \by \Norm(0, \variance \mat{I})} \left[ \ln\( \frac{\Partition(\LearnedParams + \gaussians)}{\Partition(\LearnedParams)} \) \right] .
\label{eq:post_action_prob_lb_2}
\end{align}
The second line follows from the fact that the expected dot product of any vector with a zero-mean Gaussian vector is zero. Applying Jensen's inequality again to the last term, we have
\begin{equation}
- \Ep_{\gaussians \by \Norm(0, \variance \mat{I})} \left[ \ln\( \frac{\Partition(\LearnedParams + \gaussians)}{\Partition(\LearnedParams)} \) \right]
	\geq - \ln \Ep_{\gaussians \by \Norm(0, \variance \mat{I})} \left[ \frac{\Partition(\LearnedParams + \gaussians)}{\Partition(\LearnedParams)} \right] .
\label{eq:post_action_prob_lb_3}
\end{equation}
Observe that
\begin{equation}
\frac{\Partition(\LearnedParams + \gaussians)}{\Partition(\LearnedParams)}
	= \sum_{\action' \in \Actions} \frac{\exp(\LearnedParams \cdot \Feats(\context, \action') )}{\Partition(\LearnedParams)} \, \exp( \gaussians \cdot \Feats(\context, \action') )
	= \Ep_{\action' \by \SoftmaxPolicy_{\LearnedParams}(\context)} \left[ \exp( \gaussians \cdot \Feats(\context, \action') ) \right] .
\label{eq:post_action_prob_lb_4}
\end{equation}
Thus, via linearity of expectation,
\begin{equation}
\Ep_{\gaussians \by \Norm(0, \variance \mat{I})} \left[ \frac{\Partition(\LearnedParams + \gaussians)}{\Partition(\LearnedParams)} \right]
	= \Ep_{\action' \by \SoftmaxPolicy_{\LearnedParams}(\context)} \Ep_{\gaussians \by \Norm(0, \variance \mat{I})} \left[ \exp( \gaussians \cdot \Feats(\context, \action') ) \right] .
\label{eq:post_action_prob_lb_5}
\end{equation}
The right-hand inner expectation is simply the moment-generating function of a multivariate Gaussian. Combining its closed-form expression,
\begin{equation}
\Ep_{\gaussians \by \Norm(0, \variance \mat{I})} \left[ \exp( \gaussians \cdot \Feats(\context, \action') ) \right]
	= \exp\( \frac{\variance}{2} \norm{\Feats(\context, \action')}^2 \) ,
\label{eq:post_action_prob_lb_6}
\end{equation}
with \cref{eq:post_action_prob_lb_5}, we have
\begin{align}
- \ln \Ep_{\gaussians \by \Norm(0, \variance \mat{I})} \left[ \frac{\Partition(\LearnedParams + \gaussians)}{\Partition(\LearnedParams)} \right]
	&= - \ln \Ep_{\action' \by \SoftmaxPolicy_{\LearnedParams}(\context)} \left[ \exp\( \frac{\variance}{2} \norm{\Feats(\context, \action')}^2 \) \right] \\
	&\geq - \ln \Ep_{\action' \by \SoftmaxPolicy_{\LearnedParams}(\context)} \left[ \exp\( \frac{\variance \FeatMaxNorm^2}{2} \) \right] \\
	&= - \frac{\variance \FeatMaxNorm^2}{2} .
\label{eq:post_action_prob_lb_7}
\end{align}
The inequality follows from the assumption that $\norm{\Feats(\context, \action')} \leq \FeatMaxNorm$. Finally, combining \cref{eq:post_action_prob_lb_1,eq:post_action_prob_lb_2,eq:post_action_prob_lb_3,eq:post_action_prob_lb_7}, we have
\begin{align}
\Policy_{\Posterior}(\action \| \context)
	&\geq \exp\( \Ep_{\params \by \Norm(\LearnedParams, \variance \mat{I})} \left[ \params \cdot \Feats(\context, \action) - \ln\Partition(\params) \right] \) \\
	&= \exp\( \LearnedParams \cdot \Feats(\context, \action) - \ln\Partition(\LearnedParams) - \Ep_{\gaussians \by \Norm(0, \variance \mat{I})} \left[ \ln\( \frac{\Partition(\LearnedParams + \gaussians)}{\Partition(\LearnedParams)} \) \right] \) \\
	&\geq \exp\( \LearnedParams \cdot \Feats(\context, \action) - \ln\Partition(\LearnedParams) - \ln \Ep_{\gaussians \by \Norm(0, \variance \mat{I})} \left[ \frac{\Partition(\LearnedParams + \gaussians)}{\Partition(\LearnedParams)} \right] \) \\
	&\geq \exp\( \LearnedParams \cdot \Feats(\context, \action) - \ln\Partition(\LearnedParams) - \frac{\variance \FeatMaxNorm^2}{2} \) \\
	&= \SoftmaxPolicy_{\LearnedParams}(\action \| \context) \exp\( -\frac{\variance \FeatMaxNorm^2}{2} \) .
\end{align}

To prove the upper bound, first observe that
\begin{align}
\SoftmaxPolicy_{\params}(\action \| \context)
	&= \exp\( \LearnedParams \cdot \Feats(\context, \action) - \ln\Partition(\LearnedParams) + \gaussians \cdot \Feats(\context, \action) - \ln\( \frac{\Partition(\LearnedParams + \gaussians)}{\Partition(\LearnedParams)} \) \) \\
	&= \SoftmaxPolicy_{\LearnedParams}(\action \| \context) \exp\( \gaussians \cdot \Feats(\context, \action) - \ln\( \frac{\Partition(\LearnedParams + \gaussians)}{\Partition(\LearnedParams)} \) \) \\
	&= \SoftmaxPolicy_{\LearnedParams}(\action \| \context) \exp\( \gaussians \cdot \Feats(\context, \action) - \ln\Ep_{\action' \by \SoftmaxPolicy_{\LearnedParams}(\context)} \left[ \exp( \gaussians \cdot \Feats(\context, \action') ) \right] \) \\
	&\leq \SoftmaxPolicy_{\LearnedParams}(\action \| \context) \exp\( \gaussians \cdot \Feats(\context, \action) - \Ep_{\action' \by \SoftmaxPolicy_{\LearnedParams}(\context)} \left[ \gaussians \cdot \Feats(\context, \action') \right] \) \\
	&\leq \SoftmaxPolicy_{\LearnedParams}(\action \| \context) \Ep_{\action' \by \SoftmaxPolicy_{\LearnedParams}(\context)} \left[ \exp\( \gaussians \cdot (\Feats(\context, \action) - \Feats(\context, \action')) \) \right] .
\end{align}
The inequalities follow from Jensen's inequality. We then have that
\begin{align}
\Policy_{\Posterior}(\action \| \context)
	&= \Ep_{\params \by \Norm(\LearnedParams, \variance \mat{I})} \left[ \SoftmaxPolicy_{\params}(\action \| \context) \right] \\
	&\leq \SoftmaxPolicy_{\LearnedParams}(\action \| \context) \Ep_{\action' \by \SoftmaxPolicy_{\LearnedParams}(\context)} \Ep_{\gaussians \by \Norm(0, \variance \mat{I})} \left[ \exp\( \gaussians \cdot (\Feats(\context, \action) - \Feats(\context, \action')) \) \right] .
\end{align}
The right-hand inner expectation is the moment-generating function of a multivariate Gaussian:
\begin{align}
\Ep_{\gaussians \by \Norm(0, \variance \mat{I})} \left[ \exp\( \gaussians \cdot (\Feats(\context, \action) - \Feats(\context, \action')) \) \right]
	&= \exp\( \frac{\variance}{2} \norm{\Feats(\context, \action) - \Feats(\context, \action')}^2 \) \\
	&\leq \exp\( \frac{\variance}{2} (\norm{\Feats(\context, \action)} + \norm{\Feats(\context, \action')})^2 \) \\
	&\leq \exp\( \frac{\variance}{2} (\FeatMaxNorm + \FeatMaxNorm)^2 \) \\
	&= \exp(2 \variance \FeatMaxNorm^2) .
\end{align}
The first inequality follows from the triangle inequality. Therefore,
\begin{equation}
\Policy_{\Posterior}(\action \| \context)
	\leq \SoftmaxPolicy_{\LearnedParams}(\action \| \context) \Ep_{\action' \by \SoftmaxPolicy_{\LearnedParams}(\context)} [ \exp(2 \variance \FeatMaxNorm^2) ]
	= \SoftmaxPolicy_{\LearnedParams}(\action \| \context) \exp(2 \variance \FeatMaxNorm^2) ,
\end{equation}
which completes the proof.

\subsection{Proofs of \cref{prop:mixed_logit_bcrm_objective,prop:mixed_logit_bcrm_convex_objective}}
\label{sec:proof_mixed_logit_bcrm_objective}

We start by proving \cref{prop:mixed_logit_bcrm_objective}. To simplify \cref{eq:mixed_logit_risk_bound}, we let
\begin{equation}
\alpha \defeq \ClipEmpRisk(\LearnedParams, \variance, \Data) - 1 + \frac{1}{\MinPS}
\eqand
\beta \defeq \frac{ \KLBound(\LoggingParams, \variance_0, \LearnedParams, \variance) + 2 \ln\frac{\DataSize}{\delta} }{ \MinPS \( \DataSize - 1 \) }.
\end{equation}
Noting that $\ClipEmpRisk(\LearnedParams, \variance, \Data) \leq \alpha$ (since $\MinPS^{-1} - 1 \geq 0$), we can upper-bound \cref{eq:mixed_logit_risk_bound} as
\begin{equation}
\Risk(\Policy_{\Posterior})
	\leq \alpha + \sqrt{\alpha \beta} + \beta .
\label{eq:geometric_mean}
\end{equation}
The middle term is the geometric mean of $\alpha$ and $\beta$, which is at most the arithmetic mean:
\begin{equation}
\alpha + \sqrt{\alpha \beta} + \beta
	\leq \alpha + \frac{\alpha + \beta}{2} + \beta
	= \frac{3(\alpha + \beta)}{2} .
\label{eq:geometric_mean_to_arithmetic_mean}
\end{equation}
We therefore obtain an upper bound on \cref{eq:mixed_logit_risk_bound} that omits the middle term, which can be tricky to optimize due to the interaction between $\alpha$ and $\beta$. If we optimize this upper bound,
\begin{align}
\argmin_{\substack{\LearnedParams \in \Reals^{\FeatDim} \\ \variance \in (0, \variance_0]}} \, \frac{3(\alpha + \beta)}{2}
	&= \argmin_{\substack{\LearnedParams \in \Reals^{\FeatDim} \\ \variance \in (0, \variance_0]}} \, \alpha + \beta \\
	&= \argmin_{\substack{\LearnedParams \in \Reals^{\FeatDim} \\ \variance \in (0, \variance_0]}} \, \ClipEmpRisk(\LearnedParams, \variance, \Data) - 1 + \frac{1}{\MinPS} + \frac{ \KLBound(\LoggingParams, \variance_0, \LearnedParams, \variance) + 2 \ln\frac{\DataSize}{\delta} }{ \MinPS \( \DataSize - 1 \) } \\
	&= \argmin_{\substack{\LearnedParams \in \Reals^{\FeatDim} \\ \variance \in (0, \variance_0]}} \, \ClipEmpRisk(\LearnedParams, \variance, \Data) + \frac{ \KLBound(\LoggingParams, \variance_0, \LearnedParams, \variance) }{ \MinPS \( \DataSize - 1 \) } \\
	&= \argmin_{\substack{\LearnedParams \in \Reals^{\FeatDim} \\ \variance \in (0, \variance_0]}} \, \ClipEmpRisk(\LearnedParams, \variance, \Data) + \frac{ \frac{1}{\variance_0} \norm{\LearnedParams - \LoggingParams}^2 + \FeatDim \ln\frac{\variance_0}{\variance} }{ \MinPS \( \DataSize - 1 \) } \\
	&= \argmin_{\substack{\LearnedParams \in \Reals^{\FeatDim} \\ \variance \in (0, \variance_0]}} \, \ClipEmpRisk(\LearnedParams, \variance, \Data) + \frac{ \frac{1}{\variance_0} \norm{\LearnedParams - \LoggingParams}^2 - \FeatDim \ln\variance }{ \MinPS \( \DataSize - 1 \) } ,
\label{eq:optimization_equivalence}
\end{align}
we obtain \cref{eq:mixed_logit_bcrm_objective}.

To prove \cref{prop:mixed_logit_bcrm_convex_objective}, we upper-bound $\ClipEmpRisk(\LearnedParams, \variance, \Data)$ by using the fact that $u \ln v \leq u v$ for $u, v \geq 0$. Setting
\begin{equation}
u_i \defeq \frac{\reward_i}{\max\{ \prob_i, \MinPS \}}
\eqand
v_i \defeq \frac{\SoftmaxPolicy_{\LearnedParams}(\action_i \| \context_i)}{\exp(\frac{\variance \FeatMaxNorm^2}{2})} ,
\end{equation}
we have that
\begin{align}
\ClipEmpRisk(\LearnedParams, \variance, \Data) - 1
	&= -\frac{1}{\DataSize} \sum_{i=1}^{\DataSize} \frac{\reward_i}{\max\{ \prob_i, \MinPS \}} \, \frac{\SoftmaxPolicy_{\LearnedParams}(\action_i \| \context_i)}{\exp(\frac{\variance \FeatMaxNorm^2}{2})} \\
	&= - \frac{1}{\DataSize} \sum_{i=1}^{\DataSize} u_i v_i \\
	&\leq -\frac{1}{\DataSize} \sum_{i=1}^{\DataSize} u_i \ln v_i .
\end{align}
Let
\begin{equation}
\gamma \defeq \frac{1}{\MinPS} - \frac{1}{\DataSize} \sum_{i=1}^{\DataSize} u_i \ln v_i ,
\end{equation}
and observe that $\alpha \leq \gamma$. Thus, by \cref{eq:geometric_mean,eq:geometric_mean_to_arithmetic_mean},
\begin{equation}
\Risk(\Policy_{\Posterior})
	\leq \frac{3(\alpha + \beta)}{2}
	\leq \frac{3(\gamma + \beta)}{2} .
\end{equation}
Optimizing this upper bound yields the following equivalence:
\begin{align}
& \argmin_{\substack{\LearnedParams \in \Reals^{\FeatDim} \\ \variance \in (0, \variance_0]}} \, \frac{3(\gamma + \beta)}{2} \\
	= \, & \argmin_{\substack{\LearnedParams \in \Reals^{\FeatDim} \\ \variance \in (0, \variance_0]}} \, \gamma + \beta \\
	= \, & \argmin_{\substack{\LearnedParams \in \Reals^{\FeatDim} \\ \variance \in (0, \variance_0]}} \, \frac{1}{\MinPS} + \frac{1}{\DataSize} \sum_{i=1}^{\DataSize} - u_i \ln v_i + \frac{ \KLBound(\LoggingParams, \variance_0, \LearnedParams, \variance) + 2 \ln\frac{\DataSize}{\delta} }{ \MinPS \( \DataSize - 1 \) } \\
	= \, & \argmin_{\substack{\LearnedParams \in \Reals^{\FeatDim} \\ \variance \in (0, \variance_0]}} \, \frac{1}{\DataSize} \sum_{i=1}^{\DataSize} - u_i \ln v_i + \frac{\norm{\LearnedParams - \LoggingParams}^2}{\variance_0 \, \MinPS \( \DataSize - 1 \)} - \frac{\FeatDim \ln\variance}{\MinPS \( \DataSize - 1 \)} \\
	= \, & \argmin_{\substack{\LearnedParams \in \Reals^{\FeatDim} \\ \variance \in (0, \variance_0]}} \, \frac{1}{\DataSize} \sum_{i=1}^{\DataSize} - \frac{\reward_i \ln\SoftmaxPolicy_{\LearnedParams}(\action_i \| \context_i)}{\max\{ \prob_i, \MinPS \}} + \frac{\reward_i \variance \FeatMaxNorm^2}{2 \max\{ \prob_i, \MinPS \}} + \frac{\norm{\LearnedParams - \LoggingParams}^2}{\variance_0 \MinPS (\DataSize - 1)} - \frac{\FeatDim \ln\variance}{\MinPS (\DataSize - 1)} .
\end{align}
Observe that $\LearnedParams$ and $\variance$ never interact multiplicatively in the objective function. We can therefore solve each sub-optimization separately.

Starting with $\LearnedParams$, we simply isolate the relevant terms and obtain \cref{eq:mixed_logit_bcrm_convex_objective}. For $\variance$, we must solve
\begin{equation}
\argmin_{\variance \in (0, \variance_0]} \, \frac{1}{\DataSize} \sum_{i=1}^{\DataSize} \frac{\reward_i \FeatMaxNorm^2 \variance}{2 \max\{ \prob_i, \MinPS \}} - \frac{\FeatDim \ln\variance}{\MinPS \( \DataSize - 1 \)} .
\end{equation}
Note that this objective is convex in $\variance$. If we ignore the constraint that $\variance \in (0, \variance_0]$ and let $\variance$ be any real number, then the problem has an analytic solution:
\begin{equation}
\argmin_{\variance \in \Reals} \, \frac{1}{\DataSize} \sum_{i=1}^{\DataSize} \frac{\reward_i \FeatMaxNorm^2 \variance}{2 \max\{ \prob_i, \MinPS \}} - \frac{\FeatDim \ln\variance}{\MinPS (\DataSize - 1)}
	= \frac{2 \FeatDim}{\FeatMaxNorm^2 \MinPS (\DataSize - 1)} \( \frac{1}{\DataSize} \sum_{i=1}^{\DataSize} \frac{\reward_i}{\max\{ \prob_i, \MinPS \}} \)^{-1} .
\end{equation}
This can be verified by setting the derivative equal to 0 and solving for $\variance$. Suppose the solution to the unconstrained problem lies outside of the feasible region for the constrained problem, $(0, \variance_0]$. It is easily verified that the unconstrained solution is strictly positive; thus, it must be greater than $\variance_0$. Since the objective function is convex, we must then have that the solution to the constrained problem lies at the upper boundary, $\variance_0$, which is the closest point to the unconstrained solution. Thus, the minimizer of the constrained problem is either the unconstrained solution or $\variance_0$; whichever one is smaller.

\subsection{Proof of \cref{th:data_dep_reg_risk_bound}}
\label{sec:proof_data_dep_reg_risk_bound}

To prove \cref{th:data_dep_reg_risk_bound}, we start by borrowing a result from \citet{liu:icml17}, which we simplify and specialize for our use case.
\begin{lemma}[{\citep[Lemma 1]{liu:icml17}}]
\label{lem:stability_concentration}
Let $\Hamming(\Data, \Data')$ denote the Hamming distance between two datasets, $\Data, \Data'$. Suppose there exists a constant, $\stability > 0$, such that
\begin{equation}
\sup_{\Data, \Data' : \Hamming(\Data, \Data') = 1}  \norm{\RERM - \RERM[\Data']} \,\leq\, \stability .
\label{eq:stability}
\end{equation}
(In other words, perturbing any single training example can change the learned parameters by at most $\stability$.) Then, for any $\delta \in (0,1)$,
\begin{equation}
\Pr_{\Data \by (\ContextDistribution \times \LoggingPolicy)^{\DataSize}} \left\{ \norm{ \RERM - \MeanRERM } \geq \stability \sqrt{ 2 \DataSize \ln\frac{2}{\delta} } \right\} \leq \delta .
\label{eq:stability_concentration}
\end{equation}
\end{lemma}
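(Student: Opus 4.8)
The plan is to read this as a bounded-differences concentration statement for the vector-valued map $\Data \mapsto \RERM$, and to prove it via McDiarmid's inequality applied to the scalar deviation $f(\Data) \defeq \norm{\RERM - \MeanRERM}$, combined with an Efron--Stein control of its center $\Ep[f]$. First I would verify that $f$ has bounded differences with constant $\stability$: for any $\Data, \Data'$ with $\Hamming(\Data, \Data') = 1$, the reverse triangle inequality gives $|f(\Data) - f(\Data')| \leq \norm{\RERM - \RERM[\Data']} \leq \stability$, where the last step is the stability hypothesis \cref{eq:stability}. Since $\Data \by (\ContextDistribution \times \LoggingPolicy)^{\DataSize}$ is a product of $\DataSize$ independent examples, McDiarmid's inequality then yields, for every $t > 0$,
\[
\Pr_{\Data}\{ f(\Data) \geq \Ep[f] + t \} \leq \exp\big( -2 t^2 / (\DataSize \stability^2) \big) .
\]

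Second, I would bound the centering term $\Ep[f]$. By Jensen's inequality $\Ep[f] \leq \sqrt{ \Ep[\norm{\RERM - \MeanRERM}^2] }$, and the quantity under the root is the trace of the covariance of $\RERM$, i.e.\ the sum of the per-coordinate variances. Applying the Efron--Stein inequality to each coordinate and summing, this is at most $\tfrac{1}{2} \sum_{i=1}^{\DataSize} \Ep[\norm{\RERM - \RERM[\Data^{(i)}]}^2]$, where $\Data^{(i)}$ denotes $\Data$ with its $i$-th example replaced by an independent copy. Each such $\Data^{(i)}$ is at Hamming distance one from $\Data$, so the stability hypothesis bounds every summand by $\stability^2$, giving $\Ep[\norm{\RERM - \MeanRERM}^2] \leq \DataSize \stability^2 / 2$ and hence $\Ep[f] \leq \stability \sqrt{\DataSize / 2}$.

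Third, I would combine the two bounds. Choosing $t = \stability \sqrt{ (\DataSize/2)\ln(1/\delta) }$ makes the McDiarmid tail equal to $\delta$, so with probability at least $1 - \delta$ we have $f(\Data) \leq \stability \sqrt{\DataSize/2}\,\big(1 + \sqrt{\ln(1/\delta)}\big)$. It then remains to check the elementary inequality $1 + \sqrt{\ln(1/\delta)} \leq 2\sqrt{\ln(2/\delta)}$ for all $\delta \in (0,1)$; squaring and writing $\ln(2/\delta) = \ln 2 + \ln(1/\delta)$ reduces it to a quadratic in $\sqrt{\ln(1/\delta)}$ whose discriminant is negative because $4\ln 2 > 1$, so the inequality holds unconditionally. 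This upgrades the high-probability bound to $f(\Data) \leq \stability \sqrt{ 2\DataSize \ln(2/\delta) }$, which is exactly \cref{eq:stability_concentration}.

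The main obstacle is securing the correct $\sqrt{\DataSize}$ rate for $\Ep[f]$, so that the center matches the order of the McDiarmid deviation term. A naive ghost-sample argument that swaps one coordinate at a time between two independent draws only yields $\Ep[f] \leq \DataSize \stability$, which is off by a factor of $\sqrt{\DataSize}$ and would render the final bound vacuous in the relevant regime. Routing the mean bound through the second moment via Efron--Stein is precisely what recovers the right scaling; the remaining ingredients---the reverse triangle inequality, Jensen, and the closing scalar inequality---are routine.
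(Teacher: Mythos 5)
Your proof is correct, but there is no in-paper argument to compare it against: the paper imports this statement, in simplified form, from \citep[Lemma~1]{liu:icml17} and never proves it. Your self-contained derivation is the standard route and sound at every step. The reverse triangle inequality converts the stability hypothesis \cref{eq:stability} into a bounded-differences property with constant $\stability$ for the scalar $f(\Data) \defeq \norm{\RERM - \MeanRERM}$ (note $\MeanRERM$ is a constant, so it does not disturb this); McDiarmid, valid here since the examples are i.i.d., controls the deviation of $f$ above $\Ep[f]$; and---the one place a naive argument fails, as you correctly identify---the centering term is bounded at the right $\sqrt{\DataSize}$ scale by passing to the second moment and applying Efron--Stein coordinatewise, giving $\Ep[\norm{\RERM - \MeanRERM}^2] \leq \frac{1}{2}\sum_{i=1}^{\DataSize} \Ep[\norm{\RERM - \RERM[\Data^{(i)}]}^2] \leq \DataSize\stability^2/2$, whereas a one-swap-at-a-time ghost-sample argument yields only $\DataSize\stability$ and would make the bound vacuous. (Efron--Stein needs each coordinate of $\RERM$ square-integrable, but that is automatic: each coordinate inherits bounded differences $\leq \stability$ from the norm bound, hence is sub-Gaussian about its mean.) The closing numerics also check out, with one cosmetic slip in your justification: writing $u \defeq \sqrt{\ln(1/\delta)}$, squaring $1 + u \leq 2\sqrt{\ln(2/\delta)}$ reduces to $3u^2 - 2u + (4\ln 2 - 1) \geq 0$, whose discriminant $4 - 12(4\ln 2 - 1)$ is negative because $\ln 2 > 1/3$ (equivalently $4\ln 2 > 4/3$), not merely because $4\ln 2 > 1$; the conclusion is unaffected, and the final bound $\stability\sqrt{2\DataSize\ln\frac{2}{\delta}}$ matches \cref{eq:stability_concentration} exactly.
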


To apply \cref{lem:stability_concentration}, we must identify a value of $\stability$ that satisfies \cref{eq:stability}.
\begin{lemma}
\label{lem:rerm_stability}
If the loss function, $\Loss$, is convex and $\lipschitz$-Lipschitz with respect to its first argument, then the minimizer, $\RERM$, satisfies \cref{eq:stability} for $\stability = \frac{\lipschitz}{\regparam \DataSize}$.
\end{lemma}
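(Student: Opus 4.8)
The plan is to prove uniform algorithmic stability of $\RERM$ by the classical strong-convexity argument for regularized empirical risk minimization. The crux is that, although $\frac{1}{\DataSize}\sum_i \Loss(\params,\context_i,\action_i)$ is only convex, the regularizer $\regparam\norm{\params}^2$ is $2\regparam$-strongly convex (its Hessian is $2\regparam\mat{I}$), so $\RERMObjective(\cdot,\Data)$ is $2\regparam$-strongly convex uniformly over all datasets $\Data$. Strong convexity is precisely what lets a one-example perturbation of the objective translate into a controlled perturbation of its minimizer.

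First I would fix datasets $\Data,\Data'$ with $\Hamming(\Data,\Data')=1$---differing only in, say, the $j$-th example---and write $\RERM$ and $\RERM[\Data']$ for their respective minimizers. Using the standard fact that a $\mu$-strongly convex function $F$ with minimizer $w^\star$ obeys $F(w)-F(w^\star)\geq\frac{\mu}{2}\norm{w-w^\star}^2$ for all $w$, I would instantiate this twice: once for $\RERMObjective(\cdot,\Data)$ evaluated at $\RERM[\Data']$, and once for $\RERMObjective(\cdot,\Data')$ evaluated at $\RERM$, each with $\mu=2\regparam$. Summing the two inequalities produces the lower bound
\[
\big[\RERMObjective(\RERM[\Data'],\Data)-\RERMObjective(\RERM,\Data)\big]+\big[\RERMObjective(\RERM,\Data')-\RERMObjective(\RERM[\Data'],\Data')\big]\;\geq\;2\regparam\norm{\RERM-\RERM[\Data']}^2.
\]

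Next I would bound the same left-hand side from above. Since $\Data$ and $\Data'$ agree except at the $j$-th example, the regularizers and the other $\DataSize-1$ loss terms cancel in each bracketed difference, leaving only $\frac{1}{\DataSize}$ times the $j$-th loss terms. Regrouping these so that each surviving difference compares $\Loss(\RERM[\Data'],\cdot)$ with $\Loss(\RERM,\cdot)$ at a \emph{common} example, the $\lipschitz$-Lipschitz assumption bounds the entire left-hand side by $\frac{2\lipschitz}{\DataSize}\norm{\RERM-\RERM[\Data']}$. Chaining this with the lower bound gives $2\regparam\norm{\RERM-\RERM[\Data']}^2\leq\frac{2\lipschitz}{\DataSize}\norm{\RERM-\RERM[\Data']}$; dividing through by $2\regparam\norm{\RERM-\RERM[\Data']}$ (the equal-minimizer case being trivial) yields $\norm{\RERM-\RERM[\Data']}\leq\frac{\lipschitz}{\regparam\DataSize}$, so $\stability=\frac{\lipschitz}{\regparam\DataSize}$ satisfies \cref{eq:stability}.

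The computation is routine; the only steps demanding care are the cancellation bookkeeping---making sure exactly the two $j$-th loss terms remain and are paired at a common example before applying the Lipschitz bound---and tracking the strong-convexity modulus as $2\regparam$ rather than $\regparam$, since an off-by-a-factor-of-two here would change the final constant.
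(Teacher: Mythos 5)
Your proposal is correct and follows essentially the same argument as the paper: both exploit the $2\regparam$-strong convexity of $\RERMObjective$ at the two minimizers, cancel the regularizers and the $\DataSize-1$ shared loss terms, regroup the two surviving loss differences so each compares the two minimizers at a common example, apply the $\lipschitz$-Lipschitz bound, and divide by $\norm{\RERM - \RERM[\Data']}$ to obtain $\stability = \frac{\lipschitz}{\regparam \DataSize}$. The only difference is cosmetic---you sum the two strong-convexity inequalities into a single display, whereas the paper carries the $\frac{1}{2\regparam}$ factors through a chain of equalities---and your explicit handling of the equal-minimizer case is a minor point of added care.
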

\begin{proof}
Without loss of generality, assume that the index of the example at which $\Data$ and $\Data'$ differ is $i$. It easily verified that the regularizer, $\regparam \norm{\params}^2$, is $(2\regparam)$-strongly convex; and since $\Loss$ is assumed to be convex, $\RERMObjective$ (\cref{eq:rerm_objective}), is also $(2\regparam)$-strongly convex. Therefore, using the definition of strongly convex functions, and the symmetry of distances, we have that
\begin{align}
\norm{\RERM - \RERM[\Data']}^2 ~
	= ~~~ &\frac{1}{2} \norm{\RERM - \RERM[\Data']}^2 + \frac{1}{2} \norm{\RERM[\Data'] - \RERM}^2 \\
	\leq ~~~ &\frac{1}{2 \regparam} \( \RERMObjective(\RERM, \Data') - \RERMObjective(\RERM[\Data'], \Data') \) \\
	+ ~ &\frac{1}{2 \regparam} \( \RERMObjective(\RERM[\Data'], \Data) - \RERMObjective(\RERM, \Data) \) \\
	= ~~~ &\frac{1}{2 \regparam} \( \RERMObjective(\RERM[\Data'], \Data) - \RERMObjective(\RERM[\Data'], \Data') \) \\
	+ ~ &\frac{1}{2 \regparam} \( \RERMObjective(\RERM, \Data') - \RERMObjective(\RERM, \Data) \) \\
	= ~~~ &\frac{1}{2 \regparam \DataSize} \( \Loss(\RERM[\Data'], \context_i, \action_i) - \Loss(\RERM[\Data'], \context'_i, \action'_i) \) \\
	+ ~ &\frac{1}{2 \regparam \DataSize} \( \Loss(\RERM, \context'_i, \action'_i) - \Loss(\RERM, \context_i, \action_i) \) \\
	= ~~~ &\frac{1}{2 \regparam \DataSize} \( \Loss(\RERM[\Data'], \context_i, \action_i) - \Loss(\RERM, \context_i, \action_i) \) \\
	+ ~ &\frac{1}{2 \regparam \DataSize} \( \Loss(\RERM, \context'_i, \action'_i) - \Loss(\RERM[\Data'], \context'_i, \action'_i) \) \\
	\leq ~~~ &\frac{\lipschitz}{2 \regparam \DataSize} \( \norm{\RERM[\Data'] - \RERM} + \norm{\RERM - \RERM[\Data']} \) \\
	= ~~~ &\frac{\lipschitz}{\regparam \DataSize} \norm{\RERM - \RERM[\Data']} .
\end{align}
Dividing each side by $\norm{\RERM - \RERM[\Data']}$ completes the proof.
\end{proof}

Now, we can apply \cref{lem:stability_concentration} to show that $\RERM$ concentrates around $\MeanRERM$.
\begin{lemma}
\label{lem:rerm_concentration}
If the loss function, $\Loss$, is convex and $\lipschitz$-Lipschitz with respect to its first argument, then for any $\delta \in (0,1)$,
\begin{equation}
\Pr_{\Data \by (\ContextDistribution \times \LoggingPolicy)^{\DataSize}} \left\{ \norm{ \RERM - \MeanRERM } \geq \frac{\lipschitz}{\regparam} \sqrt{ \frac{2 \ln\frac{2}{\delta}}{\DataSize} } \right\} \leq \delta .
\label{eq:rerm_concentration}
\end{equation}
\end{lemma}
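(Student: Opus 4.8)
The plan is to combine the two immediately preceding lemmas mechanically, with essentially no new probabilistic argument. First I would invoke \cref{lem:rerm_stability}: under the assumption that $\Loss$ is convex and $\lipschitz$-Lipschitz in its first argument, the minimizer $\RERM$ satisfies the uniform-stability condition \cref{eq:stability} with constant $\stability = \frac{\lipschitz}{\regparam \DataSize}$. This is precisely the hypothesis required to apply \cref{lem:stability_concentration}, so the two results chain together directly.

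Next I would substitute this value of $\stability$ into the concentration guarantee \cref{eq:stability_concentration}. The deviation threshold $\stability \sqrt{2 \DataSize \ln\frac{2}{\delta}}$ becomes $\frac{\lipschitz}{\regparam \DataSize} \sqrt{2 \DataSize \ln\frac{2}{\delta}}$; factoring $\sqrt{2\DataSize\ln\frac{2}{\delta}} = \sqrt{\DataSize}\,\sqrt{2\ln\frac{2}{\delta}}$ and cancelling one power of $\sqrt{\DataSize}$ against the $\DataSize$ in the denominator yields $\frac{\lipschitz}{\regparam} \sqrt{\frac{2 \ln\frac{2}{\delta}}{\DataSize}}$, which matches the right-hand side of \cref{eq:rerm_concentration} exactly. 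The $\DataSize^{-1/2}$ rate emerges because the per-example stability scales as $\DataSize^{-1}$ while the stability-to-concentration conversion introduces a $\sqrt{\DataSize}$ factor.

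The genuine content lives upstream, in \cref{lem:rerm_stability}, whose proof leans on the $(2\regparam)$-strong convexity of $\RERMObjective$ together with the $\lipschitz$-Lipschitz bound on the loss gap at the single differing example; \cref{lem:stability_concentration} is borrowed wholesale from \citet{liu:icml17}. Given both, the present claim is purely a matter of plugging in and simplifying the radical, so the only thing to check carefully is that algebraic step. I do not expect any real obstacle here, since all the analytic and probabilistic machinery has already been established.
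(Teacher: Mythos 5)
Your proposal is correct and matches the paper's own proof exactly: the paper likewise derives \cref{lem:rerm_concentration} by plugging the stability constant $\stability = \frac{\lipschitz}{\regparam \DataSize}$ from \cref{lem:rerm_stability} into \cref{lem:stability_concentration}, and your simplification of $\stability \sqrt{2 \DataSize \ln\frac{2}{\delta}}$ to $\frac{\lipschitz}{\regparam} \sqrt{\frac{2 \ln\frac{2}{\delta}}{\DataSize}}$ is the only algebra required.
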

\begin{proof}
Follows immediately from \cref{lem:stability_concentration,lem:rerm_stability}, with $\stability = \frac{\lipschitz}{\regparam \DataSize}$.
\end{proof}

We are now ready to prove \cref{th:data_dep_reg_risk_bound}. We start by applying \cref{th:mixed_logit_risk_bound}, with $\LoggingParams$ replaced by $\MeanRERM$, and $\delta$ replaced by $\delta / 2$. With probability at least $1 - \delta / 2$,
\begin{align}
\Risk(\Policy_{\Posterior})
	\leq \ClipEmpRisk(\LearnedParams, \variance, \Data)
	&+ \sqrt{ \frac{ \big( \ClipEmpRisk(\LearnedParams, \variance, \Data) - 1 + \frac{1}{\MinPS} \big) \big( \KLBound(\MeanRERM, \variance_0, \LearnedParams, \variance) + 2 \ln\frac{2\DataSize}{\delta} \big) }{ \MinPS \( \DataSize - 1 \) } } \\
	&+ \frac{ \( \KLBound(\MeanRERM, \variance_0, \LearnedParams, \variance) + 2 \ln\frac{2\DataSize}{\delta} \) }{ \MinPS \( \DataSize - 1 \) } .
\end{align}
Then, using the triangle inequality and \cref{lem:rerm_concentration}, we have that
\begin{align}
\norm{\LearnedParams - \MeanRERM}
	&\leq \norm{\LearnedParams - \RERM} + \norm{ \RERM - \MeanRERM } \\
	&\leq \norm{\LearnedParams - \RERM} + \frac{\lipschitz}{\regparam} \sqrt{ \frac{2 \ln\frac{4}{\delta}}{\DataSize} } ,
\end{align}
with probability at least $1 - \delta / 2$. Substituting this into \cref{eq:mixed_logit_kl_bound} yields
\begin{align}
\KLBound(\MeanRERM, \variance_0, \LearnedParams, \variance)
	&= \frac{\norm{\LearnedParams - \MeanRERM}^2}{\variance_0} + \FeatDim \ln\frac{\variance_0}{\variance} \\
	&\leq \frac{ \( \norm{\LearnedParams - \RERM} + \frac{\lipschitz}{\regparam} \sqrt{ \frac{2 \ln\frac{4}{\delta}}{\DataSize} } \)^2 }{\variance_0} + \FeatDim \ln\frac{\variance_0}{\variance} \\
	&= \hat{\KLBound}(\RERM, \variance_0, \LearnedParams, \variance) ,
\end{align}
with probability at least $1 - \delta / 2$. Thus, \cref{eq:data_dep_reg_risk_bound} holds with probability at least $1 - \delta$.

\bibliographystyle{plainnat}
\bibliography{bcrm}

\end{document}